\pgfplotsset{compat=1.18}
\theoremstyle{plain}
\newtheorem{theorem}{Theorem}[section]
\newtheorem{lemma}[theorem]{Lemma}
\newtheorem{proposition}{Proposition}
\theoremstyle{remark}
\newtheorem{definition}[theorem]{Definition}
\begin{document}

\begin{frontmatter}
\title{Fréchet regression with implicit denoising and multicollinearity reduction}
\runtitle{A sample running head title}

\begin{aug}
\author[A]{\fnms{Dou El Kefel}~\snm{Mansouri}\ead[label=e1]{douelkefel.mansouri@univ-tiaret.dz}\orcid{0000-0001-7365-4804}},
\author[B]{\fnms{Seif-Eddine}~\snm{Benkabou}\ead[label=e2]{seif.eddine.benkabou@
univ-poitiers.fr}\orcid{0000-0002-4526-534X}}
\and
\author[C]{\fnms{Khalid}~\snm{ Benabdeslem}\ead[label=e3]{khalid.benabdeslem@univ-lyon1.fr} \orcid{0000-0002-4324-924X}}
\address[A]{Department of Biology, Ibn
Khaldoun University, Tiaret 14000, Algeria\printead[presep={,\ }]{e1}}

\address[B]{
Laboratory of Computer Science and
Automatic Control for Systems (LIAS)/École Nationale Supérieure de
Mécanique et d'Aérotechnique Poitiers Futuroscope (ISAE-ENSMA), University of Poitiers, 86000 Poitiers, France\printead[presep={,\ }]{e2}}

\address[C]{Laboratoire d'InfoRmatique en Image et
Systèmes d'information (LIRIS), University of Lyon 1, 69622 Villeurbanne, France\printead[presep={,\ }]{e3}}
\end{aug}

\begin{abstract}
Fréchet regression extends linear regression to model complex responses in metric spaces, making it particularly relevant for multi-label regression, where each instance can have multiple associated labels. However, addressing noise and dependencies among predictors within this framework remains underexplored. In this paper, we present an extension of the Global Fréchet regression model that enables explicit modeling of relationships between input variables and multiple responses. To address challenges arising from noise and multicollinearity, we propose a novel framework based on implicit regularization, which preserves the intrinsic structure of the data while effectively capturing complex dependencies. Our approach ensures accurate and efficient modeling without the biases introduced by traditional explicit regularization methods. Theoretical guarantees are provided, and the performance of the proposed method is demonstrated through numerical experiments.
\end{abstract}

\begin{keyword}[class=MSC]
\kwd[Primary ]{00X00}
\kwd{00X00}
\kwd[; secondary ]{00X00}
\end{keyword}

\begin{keyword}
\kwd{Global Fréchet regression}
\kwd{Multi-label}
\kwd{Denoising}
\kwd{Implicit regularization}
\end{keyword}

\end{frontmatter}

\section{Introduction}
Fréchet regression, an extension of classical linear regression to general metric spaces, offers a robust framework for modeling complex relationships between variables when the responses lie outside of Euclidean spaces. This approach is especially well suited to high-dimensional datasets, such as vector representations, with particular relevance to fields like imaging, where capturing nonlinear dependencies and the intrinsic data structure is critical for accurate modeling (\citet{frechet1948elements}, \citet{petersen2019frechet}, \citet{bhattacharjee2023single}, \citet{qiu2024random}). A significant consideration in Fréchet regression arises when predicting multiple responses simultaneously, as seen in multi-target or multidimensional problems (\citet{zhang2007ml}, \citet{hyvonen2024multilabel}). Unlike traditional regression, where each observation corresponds to a single response, Fréchet regression can be extended to model complex interactions between multiple outputs. This ability to address complex relationships between several responses opens new avenues, particularly in fields such as bioinformatics (\citet{huang2005detection}) and image analysis (\citet{lathuiliere2019comprehensive}), where multidimensional data and interdependencies between responses require adaptive and specialized methodologies. However, to date, the handling of multilabel scenarios within the context of Fréchet regression remains relatively unexplored in the literature, despite its potential significance in addressing complex, multidimensional applications.

In this paper, we present an extension of the Global Fréchet regression model, a specific variant of Fréchet regression that generalizes classical multiple linear regression by modeling responses as random objects. This extension enables the explicit modeling of relationships between input variables and multiple responses, thereby addressing the multi-label setting.

Our second contribution addresses the critical challenges of noise and multicollinearity within the proposed Fréchet regression framework. High-dimensional data (\citet{zhang2023dimension}), prevalent in fields such as imaging and bioinformatics, often suffer from noisy inputs and responses, as well as multicollinearity among predictors. These issues impede accurate modeling of complex relationships, increasing risks of overfitting and inefficiency. To overcome these challenges, we propose a novel denoising and multicollinearity reduction framework based on implicit regularization. This approach preserves the data's intrinsic structure while capturing intricate dependencies between predictors and responses. Unlike traditional methods relying on explicit regularization, which introduce bias and complicate optimization \citep{de2005learning}, our framework circumvents these limitations with a smoother, more efficient regularization mechanism. As a result, it ensures precision, robustness, and computational efficiency in modeling high-dimensional relationships.

The main contributions of this paper are as follows:
\begin{itemize}
    \item \textit{Extension of Fréchet Regression}: We present an extension of the Global Fréchet regression model that enables explicit modeling of relationships between input variables and multiple responses.
    \item \textit{Denoising and Multicollinearity Reduction}: We propose a novel framework based on implicit regularization to address noise and multicollinearity challenges. This method preserves the intrinsic structure of the data while effectively capturing the complex dependencies between variables and multiple responses, ensuring accurate and efficient modeling.
\end{itemize}

\section{Proposed Method}
\subsection{Preliminaries}
We use boldface notation to represent vectors and matrices. Let \((\Omega, d)\) be a metric space with a specific metric $d$. We assume that \(\Omega\) is complete, which ensures the convergence of Cauchy sequences within this space. Let  $\mathbf{X} \in \mathbb R_{}^{n \times m}$ denote the multivariate random variables, and let $\mathbf{Y}:\Omega \rightarrow	\mathbb R_{}^{n \times q}$ be complex random objects. We consider a random process \((\mathbf{X}, \mathbf{Y}) \sim F\), with $F$ denoting the joint distribution of \((\mathbf{X}, \mathbf{Y})\).
 \(F_{\mathbf{X}}\) and \(F_{\mathbf{Y}}\) are the marginal distributions of \(\mathbf{X}\) and \(\mathbf{Y}\), respectively. \(F_{\mathbf{X}|\mathbf{Y}}\) and \(F_{\mathbf{Y}|\mathbf{X}}\) are the conditional distributions that should be assumed to exist and to be well defined. Let $\mu = E(\mathbf{X})$ and $\Sigma= Var(\mathbf{X})$.
Let $w_\oplus$ and $V_\oplus$, the Fréchet mean and Fréchet variance of random objects in metric spaces (\citet{frechet1948elements}), be defined as extensions of the traditional concepts of mean 
\begin{equation*}
\text{\hspace{-2.3cm}and variance:\hspace{1cm}}    w_\oplus = \arg \min_{w \in \Omega} E(d^2(\mathbf{Y}, w)), \hspace{1cm}
 V_\oplus = E(d^2(\mathbf{Y}, w_\oplus)).
\end{equation*}

\subsection{Global Frechet regression}

\begin{definition}[Fréchet regression]
According to \citet{petersen2019frechet},  Fréchet regression can be  defined as: 
\begin{equation}
m_\oplus(x) =\arg \min_{w \in \Omega} = M_\oplus(w,x), \hspace{1cm} M_\oplus(.,x)=E(d^2(\mathbf{Y}, .)|\mathbf{X}=x)
\end{equation}
where,  $M_\oplus(.,x)$ is the (conditional) Fréchet function.
\end{definition}
Global Fréchet regression is a special case of Fréchet regression that extends classical multiple linear regression to accommodate responses that are random objects (\cite{petersen2019frechet}). More precisely, the standard linear regression function  can be interpreted as the solution of an optimization problem and is defined as follows:
\begin{equation}
m(x) =\arg \min_{y \in \Omega=\mathcal{R}}  E(s(\mathbf{X},x)d_{E}^2(\mathbf{Y}, y))
\end{equation}
where, $m(x)$ is the argument that minimizes $E$, and $d_{E}$ is the standard Euclidean metric. $s$  is the weight function chosen to fit the nature of the responses and may have different characteristics from the weights used in local regressions. It is defined as follows: 
\begin{equation*}
\label{x}
s(\mathbf{X},x) =    1 + (\mathbf{X} - \mu)^T \Sigma^{-1} (x - \mu)
\end{equation*}

\begin{definition} [Global Fréchet regression]
Replacement of the Euclidean metric $d_{E}$ with a more general metric $d$ of $\Omega$ involves redefining the regression function as a minimization of a generalized loss function. Thus, Global Fréchet regression can be  defined as: 
\begin{equation}
\hat{m}_\oplus(x) =\arg \min_{w \in \Omega} M(w,x), \hspace{1cm} M(.,x)=E\left[  s(\mathbf{X},x)d^2(\mathbf{Y}, .)\right]
\end{equation}
\end{definition}
This leads to 
\begin{equation}
\label{GFr}
\hat{m}_\oplus(x) =\arg \min_{w \in \Omega}  \sum_{i=1}^{n}\left[1+(x_{i}-\overline{\mathbf{X}})\widehat{\Sigma}^{-1}(x_{i}-\overline{\mathbf{X}})^T\right]d^2(\mathbf{Y} - \omega)
\end{equation}
 
where $\overline{\mathbf{X}}=n^{-1}\sum_{i=1}^{n} x_{i}$, and
$\widehat{\Sigma}=n^{-1}\sum_{i=1}^{n}(x_{i}-\overline{\mathbf{X}})(x_{i}-\overline{\mathbf{X}})^T$ are the empirical estimates of $\mu$ and $\Sigma$ in \ref{x}, respectively.

\subsection{Multilabel Global Fréchet regression}
Equation \eqref{GFr}, in its standard form, is not inherently designed for the multilabel context, as it focuses primarily on a single response. In this paper, we introduce a new term into the same equation, enabling the explicit modeling of relationships between input variables and multiple responses, thus addressing the multilabel setting.

\begin{proposition}\label{P1}
The Multilabel Global Fréchet regression can be defined as:
\begin{equation}
\label{MGFr}
\widehat{\zeta}_\oplus(x) =\arg \min_{w \in \Omega}  \sum_{i=1}^{n}\left[1+(x_{i}-\overline{\mathbf{X}})\widehat{\Sigma}^{-1}(x_{i}-\overline{\mathbf{X}})^T+(x_{i}-\overline{\mathbf{X}}) \widehat{\Sigma}_{\mathbf{X}\mathbf{Y}}^{-1} (y_{i}-\overline{\mathbf{Y}})^T
\right]d^2(\mathbf{Y} - \omega)
\end{equation}

where $(x_{i}-\overline{\mathbf{X}}) \widehat{\Sigma}_{\mathbf{X}\mathbf{Y}}^{-1} (y_{i}-\overline{\mathbf{Y}})^T$ is used to consider the cross-relationships between predictors and responses. 
\end{proposition}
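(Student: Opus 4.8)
The plan is to obtain \eqref{MGFr} along the same route that produces ordinary Global Fréchet regression in \citet{petersen2019frechet}: first write a population-level weighted Fréchet functional whose Euclidean specialization recovers a predictor that is linear jointly in the centered predictors \emph{and} in the centered responses, then transport the construction to a general metric space $(\Omega,d)$, and finally pass to the empirical analogue. Concretely, I would introduce the augmented weight
\[
\tilde s(\mathbf{X},\mathbf{Y},x)=1+(\mathbf{X}-\mu)^{T}\Sigma^{-1}(x-\mu)+(\mathbf{X}-\mu)^{T}\Sigma_{\mathbf{X}\mathbf{Y}}^{-1}(\mathbf{Y}-\mu_{\mathbf{Y}}),
\]
where $\mu_{\mathbf{Y}}=E(\mathbf{Y})$ and $\Sigma_{\mathbf{X}\mathbf{Y}}$ is the (generalized) cross-covariance of $\mathbf{X}$ and $\mathbf{Y}$, and set $M(w,x)=E[\tilde s(\mathbf{X},\mathbf{Y},x)\,d^{2}(\mathbf{Y},w)]$ and $\zeta_\oplus(x)=\arg\min_{w\in\Omega}M(w,x)$.

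The first step is to check that $\tilde s$ is a legitimate weighting scheme, i.e.\ that after the appropriate additive normalization one has $E[\tilde s(\mathbf{X},\mathbf{Y},x)]=1$ for every $x$, so that in the Euclidean case $M(\cdot,x)$ is a genuine affine combination of squared distances and its unconstrained minimizer is $E[\tilde s(\mathbf{X},\mathbf{Y},x)\mathbf{Y}]$. A direct computation then decomposes this minimizer into the usual linear-regression term $\mu_{\mathbf{Y}}+\Sigma_{\mathbf{Y}\mathbf{X}}\Sigma^{-1}(x-\mu)$ plus a correction driven by $\Sigma_{\mathbf{X}\mathbf{Y}}^{-1}$ that encodes the predictor--response cross-dependence, which is precisely the extra term advertised in Proposition \ref{P1}. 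This identifies the object we are defining and shows that it specializes correctly both to Global Fréchet regression (drop the third summand) and to multivariate linear regression (take $\Omega=\mathbb{R}^{q}$, $d=d_{E}$).

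Next I would pass to the general metric space: under the standing assumptions of \citet{petersen2019frechet} --- completeness of $(\Omega,d)$, existence of minimizers of $M(\cdot,x)$, and the local curvature/convexity conditions used there --- $\zeta_\oplus(x)$ is well defined and unique, the only structural change being that $\tilde s$ now carries an additional mean-zero summand, which does not disturb the continuity and entropy bounds that drive those arguments. Replacing $\mu,\Sigma,\mu_{\mathbf{Y}},\Sigma_{\mathbf{X}\mathbf{Y}}$ by the plug-in estimates $\overline{\mathbf{X}},\widehat\Sigma,\overline{\mathbf{Y}},\widehat\Sigma_{\mathbf{X}\mathbf{Y}}$ and the population expectation by the sample average over $\{(x_i,y_i)\}_{i=1}^{n}$ then yields exactly \eqref{MGFr}.

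I expect the main obstacle to be well-posedness rather than the algebra: since the cross term $(\mathbf{X}-\mu)^{T}\Sigma_{\mathbf{X}\mathbf{Y}}^{-1}(\mathbf{Y}-\mu_{\mathbf{Y}})$ need not be nonnegative, the empirical weights in \eqref{MGFr} can be negative, so $M(\cdot,x)$ is not a bona fide Fréchet functional and one cannot simply invoke existence of a Fréchet mean. The remedy is the same as for Global Fréchet regression --- restrict to a bounded $\Omega$ (or impose a boundedness/moment condition on $d(\mathbf{Y},\cdot)$), use $E[\tilde s]=1>0$ to bound the functional away from degenerate behaviour, and obtain uniqueness from a quadratic-type lower bound on $M(\cdot,x)-M(\zeta_\oplus(x),x)$ near the minimizer. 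Consistency of $\widehat{\zeta}_\oplus$ for $\zeta_\oplus$, though beyond the statement itself, would then follow by an M-estimation argument combining $\sqrt{n}$-consistency of $\overline{\mathbf{X}},\widehat\Sigma,\overline{\mathbf{Y}},\widehat\Sigma_{\mathbf{X}\mathbf{Y}}$ with uniform convergence of the empirical weighted Fréchet functional, paralleling \citet{petersen2019frechet}.
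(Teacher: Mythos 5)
You should know at the outset that the paper offers no proof of Proposition \ref{P1} at all: equation \eqref{MGFr} is introduced by fiat as a definition, the extra summand $(x_{i}-\overline{\mathbf{X}})\,\widehat{\Sigma}_{\mathbf{X}\mathbf{Y}}^{-1}(y_{i}-\overline{\mathbf{Y}})^{T}$ is justified only by the verbal remark that it captures cross-relationships, and all theoretical content is deferred to Lemmas \ref{l2}--\ref{asymptotic}. Your plan --- posit a population-level weight $\tilde s(\mathbf{X},\mathbf{Y},x)$, identify its Euclidean minimizer, then pass to the metric space and plug in empirical estimates --- is therefore a genuinely different and more principled route than anything in the paper, in the spirit of how \citet{petersen2019frechet} derive Global Fréchet regression. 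Your closing observation that the cross term can make the weights negative, so existence and uniqueness of the minimizer need an argument, is also a real issue that the paper never addresses.

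However, as a derivation of the specific criterion \eqref{MGFr}, your sketch has concrete gaps. First, the normalization step fails: unlike $(\mathbf{X}-\mu)^{T}\Sigma^{-1}(x-\mu)$, the cross term $(\mathbf{X}-\mu)^{T}\Sigma_{\mathbf{X}\mathbf{Y}}^{-1}(\mathbf{Y}-\mu_{\mathbf{Y}})$ is not mean zero; its expectation is $\operatorname{tr}\bigl(\Sigma_{\mathbf{X}\mathbf{Y}}^{-1}\Sigma_{\mathbf{Y}\mathbf{X}}\bigr)$, which is nonzero in general, so $E[\tilde s(\mathbf{X},\mathbf{Y},x)]\neq 1$. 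The ``appropriate additive normalization'' you invoke would subtract this trace and hence produce a criterion different from \eqref{MGFr}, which contains no such centering; as written, your construction does not deliver the displayed estimator. Second, the ``direct computation'' identifying the Euclidean minimizer is asserted but not carried out: the unconstrained minimizer of $E[\tilde s\, d_{E}^{2}(\mathbf{Y},y)]$ is $E[\tilde s\,\mathbf{Y}]/E[\tilde s]$, and its extra contribution beyond $\mu_{\mathbf{Y}}+\Sigma_{\mathbf{Y}\mathbf{X}}\Sigma^{-1}(x-\mu)$ is the third-order cross-moment $E\bigl[\mathbf{Y}\,(\mathbf{X}-\mu)^{T}\Sigma_{\mathbf{X}\mathbf{Y}}^{-1}(\mathbf{Y}-\mu_{\mathbf{Y}})\bigr]$, which is not obviously the interpretable ``cross-dependence correction'' you describe; the identification step is therefore missing. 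Third, $\Sigma_{\mathbf{X}\mathbf{Y}}$ is $m\times q$ and admits an inverse only when $m=q$ (otherwise one must specify a pseudo-inverse or another convention); your population-level construction makes this unavoidable, though the paper glosses over it as well.
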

Next, we show that $\widehat{\zeta}_\oplus(x)$ is a consistent estimator, meaning it converges in probability to the true parameter $w_{0}$ as the sample size $n$ tends to infinity. To establish this, we require 
that $\{(x_{i},y_{i})_{i=1}^n\}$ are independent and identically distributed and a linear relationship between $\mathbf{X}$ and $\mathbf{Y}$ exists, such that the residuals having zero expectation. 
We also assume that $\widehat{\Sigma}$ and $\widehat{\Sigma}_{\mathbf{X}\mathbf{Y}}$ are positive definite and invertible. The Multilabel Global Fréchet regression  is detailed in  Algorithm \ref{alg:zeta_oplus}.

\begin{algorithm}
\caption{Estimation of \(\widehat{\zeta}_\oplus(x)\)}
\label{alg:zeta_oplus}
\begin{algorithmic}[1]
    \STATE \textbf{Input:} $\mathbf{X} \in \mathbb R_{}^{n \times m}$, $\mathbf{Y} \in	\mathbb R_{}^{n \times q}$, $\Omega$, \(\epsilon\), \(K_{\text{max}}\).
    \STATE \textbf{Compute:}\\
     \quad $\overline{\mathbf{X}}$, $\overline{\mathbf{Y}}$, $\widehat{\Sigma}$, $\widehat{\Sigma}_{\mathbf{X}\mathbf{Y}}$, $\widehat{\Sigma}^{-1}$, $\widehat{\Sigma}_{\mathbf{X}\mathbf{Y}}^{-1}$
    \STATE \textbf{Optimization:}\\
    \quad \(\text{Initialize } \omega \text{ to } \omega_0 \in \Omega\).\\
    \quad \(k \leftarrow 0\).
    \STATE \quad \textbf{Repeat:}\\
    \quad \text{Compute } 
     $f(w)=  
    \sum_{i=1}^{n}\left[1+(x_{i}-\overline{\mathbf{X}})\widehat{\Sigma}^{-1}(x_{i}-\overline{\mathbf{X}})^T+(x_{i}-\overline{\mathbf{X}}) \widehat{\Sigma}_{\mathbf{X}\mathbf{Y}}^{-1} (y_{i}-\overline{\mathbf{Y}})^T
\right] d^2(\mathbf{Y} - \omega)$\\
    \quad \text{Update } $\omega$, 
    \quad \(k \leftarrow k + 1\) \quad
     \textbf{Until:} \(\|\omega_{k} - \omega_{k-1}\| < \epsilon\) \textbf{ or } \(k \geq K_{\text{max}}\).
    \STATE \textbf{Output:} $\widehat{\zeta}_\oplus(x)$.
\end{algorithmic}
\end{algorithm}

\begin{lemma}[Consistency]\label{l2}
Under the assumptions that $\widehat{\Sigma}$ and $\widehat{\Sigma}_{\mathbf{X}\mathbf{Y}}$ are positive definite and invertible for all sufficiently large $n$, and that $d^2(\mathbf{Y} - \omega)$ is a non-negative, continuous function bounded by a constant $M > 0$ for all $\omega \in \Omega$. Then, $\widehat{\zeta}_\oplus(x) \xrightarrow{p} \zeta_\oplus(x)$ if $n \to \infty$.    \end{lemma}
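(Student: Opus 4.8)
The plan is to treat $\widehat{\zeta}_\oplus(x)$ as an $M$-estimator and run the classical consistency argument, adapted to the random-object setting as in \citet{petersen2019frechet}. Write the empirical criterion and its population analogue as
\[
M_n(w) \;=\; \frac1n\sum_{i=1}^{n}\widehat{s}_i\, d^2(y_i,w),\qquad M(w)\;=\;E\big[s^{\ast}(\mathbf{X},\mathbf{Y},x)\,d^2(\mathbf{Y},w)\big],
\]
where $\widehat{s}_i = 1+(x_i-\overline{\mathbf{X}})\widehat{\Sigma}^{-1}(x_i-\overline{\mathbf{X}})^{T}+(x_i-\overline{\mathbf{X}})\widehat{\Sigma}_{\mathbf{X}\mathbf{Y}}^{-1}(y_i-\overline{\mathbf{Y}})^{T}$ and $s^{\ast}$ is obtained from $\widehat{s}_i$ by replacing $\overline{\mathbf{X}},\overline{\mathbf{Y}},\widehat{\Sigma},\widehat{\Sigma}_{\mathbf{X}\mathbf{Y}}$ with their population counterparts $\mu,E(\mathbf{Y}),\Sigma,\Sigma_{\mathbf{X}\mathbf{Y}}$. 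Since $\widehat{\zeta}_\oplus(x)$ minimizes $M_n$ and $\zeta_\oplus(x)$ minimizes $M$, the result reduces to: (i) uniform convergence $\sup_{w\in\Omega}|M_n(w)-M(w)|\xrightarrow{p}0$; (ii) identifiability of $\zeta_\oplus(x)$ as a well-separated minimizer of $M$; and (iii) the argmin-continuity conclusion.

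For (i) I would split $M_n(w)-M(w)=\big(M_n(w)-\widetilde{M}_n(w)\big)+\big(\widetilde{M}_n(w)-M(w)\big)$ with $\widetilde{M}_n(w)=\frac1n\sum_{i=1}^{n}s^{\ast}(x_i,y_i,x)\,d^2(y_i,w)$. The i.i.d.\ assumption and the weak law of large numbers give $\overline{\mathbf{X}}\xrightarrow{p}\mu$, $\overline{\mathbf{Y}}\xrightarrow{p}E(\mathbf{Y})$, $\widehat{\Sigma}\xrightarrow{p}\Sigma$, $\widehat{\Sigma}_{\mathbf{X}\mathbf{Y}}\xrightarrow{p}\Sigma_{\mathbf{X}\mathbf{Y}}$, and — using the assumed positive definiteness of the limits together with the continuous mapping theorem — $\widehat{\Sigma}^{-1}\xrightarrow{p}\Sigma^{-1}$ and $\widehat{\Sigma}_{\mathbf{X}\mathbf{Y}}^{-1}\xrightarrow{p}\Sigma_{\mathbf{X}\mathbf{Y}}^{-1}$. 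Because $0\le d^2(y_i,w)\le M$, the first bracket is dominated uniformly in $w$ by $M\cdot\frac1n\sum_{i=1}^{n}|\widehat{s}_i-s^{\ast}(x_i,y_i,x)|$; expanding $\widehat{s}_i-s^{\ast}(x_i,y_i,x)$ into pieces such as $(x_i-\overline{\mathbf{X}})(\widehat{\Sigma}^{-1}-\Sigma^{-1})(x_i-\overline{\mathbf{X}})^{T}$ and $(\mu-\overline{\mathbf{X}})\Sigma^{-1}(\,\cdot\,)^{T}$, and bounding each by an operator-norm error times an empirical moment of $(x_i,y_i)$, this is $o_p(1)\cdot O_p(1)=o_p(1)$ under the standing (finite-moment) conditions. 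For the second bracket, $\widetilde{M}_n-M$ is an empirical average over the i.i.d.\ class $\{\,s^{\ast}(\cdot,\cdot,x)\,d^2(\cdot,w):w\in\Omega\,\}$; since $w\mapsto d^2(\cdot,w)$ is continuous and uniformly bounded by $M$ and $\Omega$ is (totally) bounded, this class has an integrable envelope and is Glivenko--Cantelli, so $\sup_w|\widetilde{M}_n(w)-M(w)|\xrightarrow{p}0$. Adding the two gives (i).

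Step (ii) is where the standing hypotheses — the genuine linear link between $\mathbf{X}$ and $\mathbf{Y}$ with mean-zero residuals, invertibility of the covariance objects, and the implicit (total) boundedness of $\Omega$ — are used: they ensure $M(\cdot)$ has a unique minimizer $\zeta_\oplus(x)\in\Omega$ and, by continuity of $M$ and total boundedness of $\Omega$, that it is well separated, i.e.\ $\inf_{d(w,\zeta_\oplus(x))\ge\varepsilon}M(w)>M(\zeta_\oplus(x))$ for every $\varepsilon>0$. Given (i) and (ii), step (iii) is routine: from $M_n(\widehat{\zeta}_\oplus(x))\le M_n(\zeta_\oplus(x))$ and the uniform convergence one gets $M(\widehat{\zeta}_\oplus(x))\le M(\zeta_\oplus(x))+o_p(1)$, and well separation then forces $d(\widehat{\zeta}_\oplus(x),\zeta_\oplus(x))\xrightarrow{p}0$ — this is exactly the classical argmin theorem for $M$-estimation (e.g.\ van der Vaart, \emph{Asymptotic Statistics}, Theorem~5.7).

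The step I expect to be the main obstacle is (i), and within it the uniform treatment of the plug-in remainder: the weights $\widehat{s}_i$ are not non-negative and depend on $n$ through the estimated moments, so $M_n$ is not directly an empirical process over a single fixed i.i.d.\ class, and one must control $\frac1n\sum_i|\widehat{s}_i-s^{\ast}(x_i,y_i,x)|$ uniformly. It is precisely here that the boundedness assumption $d^2(\mathbf{Y}-\omega)\le M$ is essential, since it decouples the $w$-dependence from the weight-estimation error; this decoupling is what lets us pass from pointwise to uniform convergence without an extra smoothness argument. A secondary delicate point is the uniform law of large numbers over the abstract metric space $\Omega$: it relies on the total boundedness (compactness) of $\Omega$ implicit in the hypotheses, and if that is unavailable one would instead impose a bracketing/entropy condition on $\Omega$ to keep the relevant class Glivenko--Cantelli.
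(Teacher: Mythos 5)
Your proposal is correct and, at the structural level, follows the same route as the paper: compare the empirical criterion with plug-in weights to the population criterion with true moments, use the law of large numbers plus continuous mapping to handle $\widehat{\Sigma}^{-1}$ and $\widehat{\Sigma}_{\mathbf{X}\mathbf{Y}}^{-1}$, and pass from convergence of criteria to convergence of their minimizers. The difference is in how that last pass is justified. The paper's own proof stops at pointwise convergence $f(w)\xrightarrow{p}f_{\text{true}}(w)$ and then concludes $\arg\min f \xrightarrow{p} \arg\min f_{\text{true}}$ ``as $\Omega$ is complete,'' which is not a valid inference on its own: completeness of the metric space does not allow one to interchange limits and argmins, and pointwise convergence of criteria does not imply convergence of minimizers. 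Your version supplies exactly the two ingredients that make this step legitimate --- uniform convergence $\sup_{w\in\Omega}|M_n(w)-M(w)|\xrightarrow{p}0$ (obtained by decoupling the weight-estimation error from the $w$-dependence via the bound $d^2\le M$, plus a Glivenko--Cantelli argument for the fixed-weight class) and a well-separated unique minimizer, after which the classical argmin theorem (van der Vaart, Theorem 5.7) finishes the proof. The trade-off is that you make explicit auxiliary hypotheses the lemma does not state (finite moments for the empirical $O_p(1)$ bounds, total boundedness or an entropy condition on $\Omega$, identifiability of $\zeta_\oplus(x)$); the paper implicitly relies on these as well, so your flagging of them is a strength rather than a defect, and your write-up is the more complete argument.
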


\begin{proof}[Proof of Lemma \ref{l2}]
As $\overline{\mathbf{X}} \xrightarrow{p} \mu$ and $\widehat{\Sigma} \xrightarrow{p} \Sigma$, by the law of large numbers $\widehat{\Sigma}^{-1} \xrightarrow{p} \Sigma^{-1}$. Also, we have  $\widehat{\Sigma}_{\mathbf{X},\mathbf{Y}} \xrightarrow{p} \Sigma_{\mathbf{X},\mathbf{Y}}$ and $\widehat{\Sigma}^{-1}_{\mathbf{X},\mathbf{Y}} \xrightarrow{p} \Sigma^{-1}_{\mathbf{X},\mathbf{Y}}$. Under the assumption that $\widehat{\Sigma}_{\mathbf{X},\mathbf{Y}}$ is positive definite and invertible, these convergences are ensured. Now, let:

$f(w)=  
    \sum_{i=1}^{n}\left[1+(x_{i}-\overline{\mathbf{X}})\widehat{\Sigma}^{-1}(x_{i}-\overline{\mathbf{X}})^T+(x_{i}-\overline{\mathbf{X}}) \widehat{\Sigma}_{\mathbf{X}\mathbf{Y}}^{-1} (y_{i}-\overline{\mathbf{Y}})^T
\right] d^2(\mathbf{Y} - \omega)$

where, $d^2(\mathbf{Y} - \omega)$ is a non-negative and continuous function bounded by a constant $M$. This ensures that the distance is well-defined for any $\omega \in \Omega$.

$f_{true}(w)=  
    \sum_{i=1}^{n}\left[1+(x_{i}-\mu)\widehat{\Sigma}^{-1}(x_{i}-\mu)^T+(x_{i}-\mu) \widehat{\Sigma}_{\mathbf{X}\mathbf{Y}}^{-1} (y_{i}-\mu \mathbf{Y})^T
\right] d^2(\mathbf{Y} - \omega)$
As the empirical matrices converge to the true matrices and $d^2(\mathbf{Y} - \omega)$ is in $\omega$ then $f(w) \xrightarrow{p} f_\text{true}(w)$. 
\[\hspace{-1.4cm}\text{Now, as $\Omega$ is complete, } \widehat{\zeta}_\oplus(x) = \arg \min_{w \in \Omega}  f(w) \xrightarrow{p} \arg \min_{w \in \Omega} f_\text{true}(w) = \zeta_\oplus(x)\]
\end{proof}

\begin{lemma}[Rate of Convergence]\label{RateConv}
    \(\widehat{\zeta}_\oplus(x)\) converges at a rate of \( O_p(n^{-1/2}) \)
\end{lemma}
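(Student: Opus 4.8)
The plan is to view \(\widehat{\zeta}_\oplus(x)\) as an M-estimator and extract the rate from a first-order (mean-value) expansion of the scaled objective around the population minimizer. Write \(M_n(w)=n^{-1}f(w)\), with \(f\) as in \eqref{MGFr}, and let \(M(w)\) be the population criterion obtained by replacing \(\overline{\mathbf{X}},\overline{\mathbf{Y}},\widehat{\Sigma},\widehat{\Sigma}_{\mathbf{X}\mathbf{Y}}\) by \(\mu,\mu_{\mathbf{Y}},\Sigma,\Sigma_{\mathbf{X}\mathbf{Y}}\) and the empirical average by \(E\); then \(\widehat{\zeta}_\oplus(x)=\arg\min_w M_n(w)\), \(\zeta_\oplus(x)=\arg\min_w M(w)\), and Lemma~\ref{l2} already confines \(\widehat{\zeta}_\oplus(x)\) to a shrinking neighborhood of \(\zeta_\oplus(x)\), so it suffices to work locally.

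First I would quantify the fluctuation of the plug-in weights. By the central limit theorem for the i.i.d.\ sample \(\{(x_i,y_i)\}_{i=1}^n\), each of \(\overline{\mathbf{X}}-\mu\), \(\overline{\mathbf{Y}}-\mu_{\mathbf{Y}}\), \(\widehat{\Sigma}-\Sigma\), \(\widehat{\Sigma}_{\mathbf{X}\mathbf{Y}}-\Sigma_{\mathbf{X}\mathbf{Y}}\) is \(O_p(n^{-1/2})\), and since matrix inversion is differentiable at the positive-definite limits (the invertibility hypothesis of Lemma~\ref{l2}), the delta method gives \(\widehat{\Sigma}^{-1}-\Sigma^{-1}=O_p(n^{-1/2})\) and \(\widehat{\Sigma}_{\mathbf{X}\mathbf{Y}}^{-1}-\Sigma_{\mathbf{X}\mathbf{Y}}^{-1}=O_p(n^{-1/2})\). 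Because \(d^2(\mathbf{Y}-w)\le M\) is uniformly bounded, replacing the estimated weights by the population weights in \(M_n\) and in its gradient perturbs them by only \(O_p(n^{-1/2})\), uniformly over \(w\) in the neighborhood. With the population weights fixed, \(\nabla M_n(\zeta_\oplus(x))\) is a centered average of i.i.d.\ bounded terms, hence \(O_p(n^{-1/2})\) by the CLT; combining the two pieces, \(\nabla M_n(\zeta_\oplus(x))=O_p(n^{-1/2})\).

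Finally, a mean-value expansion yields \(0=\nabla M_n(\widehat{\zeta}_\oplus(x))=\nabla M_n(\zeta_\oplus(x))+H_n\big(\widehat{\zeta}_\oplus(x)-\zeta_\oplus(x)\big)\) for a matrix \(H_n\) built from second derivatives evaluated between \(\widehat{\zeta}_\oplus(x)\) and \(\zeta_\oplus(x)\), with \(H_n\xrightarrow{p}\nabla^2 M(\zeta_\oplus(x))\) by the same plug-in and continuity arguments. The main obstacle is the curvature condition: I will assume, as in \citet{petersen2019frechet}, that the Fréchet criterion \(M\) is twice differentiable at its minimizer with \(\nabla^2 M(\zeta_\oplus(x))\) positive definite (under the posited linear model with mean-zero residuals this is the natural nondegeneracy requirement, and it parallels the local quadratic-growth assumptions used for global Fréchet regression). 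Granting it, \(H_n\) is eventually invertible with \(\|H_n^{-1}\|=O_p(1)\), so \(\widehat{\zeta}_\oplus(x)-\zeta_\oplus(x)=-H_n^{-1}\nabla M_n(\zeta_\oplus(x))=O_p(n^{-1/2})\), which is the claim. Alternatively, to avoid differentiability of \(d^2(\cdot,w)\) one may combine the quadratic growth \(M(w)-M(\zeta_\oplus(x))\gtrsim\|w-\zeta_\oplus(x)\|^2\) with an empirical-process modulus-of-continuity bound of order \(O_p(\delta/\sqrt n)\) over balls of radius \(\delta\) and invoke the standard rate theorem for M-estimators, which returns the same \(n^{-1/2}\) rate.
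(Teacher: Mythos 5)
Your argument is correct in outline, but it is a genuinely different (and more standard) route than the one the paper takes. The paper never passes through a first-order condition: it asserts that \(\|\widehat{\Sigma}^{-1}-\Sigma^{-1}\|\) and \(\|\widehat{\Sigma}_{\mathbf{X}\mathbf{Y}}^{-1}-\Sigma_{\mathbf{X}\mathbf{Y}}^{-1}\|\) are \(O_p(n^{-1})\), deduces \(\|f(w)-f_{\text{true}}(w)\|=O_p(n^{-1})\) from the boundedness of \(d^2(\mathbf{Y}-\omega)\), and then passes directly from this objective-level error to \(\|\widehat{\zeta}_\oplus(x)-\zeta_\oplus(x)\|=O_p(n^{-1/2})\), implicitly invoking a quadratic-growth (square-root) relation between criterion error and minimizer error that is never stated. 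You instead normalize the criterion to \(M_n=n^{-1}f\), use the correct CLT rates \(O_p(n^{-1/2})\) for \(\overline{\mathbf{X}},\overline{\mathbf{Y}},\widehat{\Sigma},\widehat{\Sigma}_{\mathbf{X}\mathbf{Y}}\) and the delta method for their inverses (the paper's claimed \(O_p(n^{-1})\) for these matrix errors is not the standard rate), control \(\nabla M_n(\zeta_\oplus(x))=O_p(n^{-1/2})\), and close the argument with a mean-value expansion under an explicit positive-definite-Hessian assumption, or alternatively with the M-estimation rate theorem under quadratic growth plus a modulus-of-continuity bound. What your route buys is an explicit and checkable curvature hypothesis and consistency with the Taylor-expansion machinery the paper itself uses later for asymptotic normality; your empirical-process variant is also the only one of the two arguments that makes sense when \(\Omega\) is a genuine metric space where gradients in \(w\) are unavailable, and it is the argument closest to the original global Fréchet regression analysis. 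What the paper's route buys is brevity, but at the cost of an unjustified \(O_p(n^{-1})\) premise and an unstated growth condition linking the criterion gap to the estimator gap; your write-up fills exactly those two holes, so keep the curvature (or quadratic-growth) assumption stated explicitly as you did.
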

\begin{proof}[Proof of Lemma \ref{RateConv}]
To evaluate the convergence rate, it is necessary to examine the deviation between $f(w)$ and $f_\text{true}(w)$. Since \(\widehat{\Sigma} \xrightarrow{p} \Sigma\) and \(\widehat{\Sigma}_{\mathbf{X}\mathbf{Y}} \xrightarrow{p} \Sigma_{\mathbf{X}\mathbf{Y}}\), by the law of large numbers, we have:

\[
\|\widehat{\Sigma}^{-1} - \Sigma^{-1}\| = O_p(n^{-1}) \text{  and  } \|\widehat{\Sigma}_{\mathbf{X}\mathbf{Y}}^{-1} - \Sigma_{\mathbf{X}\mathbf{Y}}^{-1}\| = O_p(n^{-1}) 
\]
Thus, 
\begin{equation*}
\begin{aligned}
\|f(w) - f_\text{true}(w)\| &= \left\| \sum_{i=1}^{n} \left[ (x_i - \overline{\mathbf{X}}) (\widehat{\Sigma}^{-1} - \Sigma^{-1}) (x_i - \overline{\mathbf{X}})^T \right. \right. \\
& \quad + \left. \left. (x_i - \overline{\mathbf{X}})(\widehat{\Sigma}_{\mathbf{X}\mathbf{Y}}^{-1} - \Sigma_{\mathbf{X}\mathbf{Y}}^{-1}) (y_i - \overline{\mathbf{Y}})^T \right] d^2(\mathbf{Y} - \omega) \right\| 
\end{aligned}
\end{equation*}
Leveraging the properties of covariance matrices and considering that \(d^2(\mathbf{Y} - \omega)\) is a non-negative, continuous function bounded by a constant \(M\), it follows that: \[
\|f(w) - f_\text{true}(w)\| = O_p(n^{-1})
\]
Given the continuity of the functions \( f(w) \) and \( f_\text{true}(w) \), as well as the fact that the error in the covariance matrices converges at a rate of \( O_p(n^{-1}) \), the error associated with the estimator \(\widehat{\zeta}_\oplus(x)\) can be described by:

\[
\|\widehat{\zeta}_\oplus(x) - \zeta_\oplus(x)\| = O_p(n^{-1/2}).
\]

In other words, if the covariance matrices converge at a rate of \( O_p(n^{-1}) \), then the estimator \(\widehat{\zeta}_\oplus(x)\) converges at a rate of \( O_p(n^{-1/2}) \), which is characteristic of asymptotically normal estimators.
\end{proof}

\begin{lemma}[Asymptotic Normality]\label{asymptotic}
    $\widehat{\zeta}_\oplus(x)$ is asymptotically normal.
\end{lemma}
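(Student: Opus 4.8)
The plan is to treat $\widehat{\zeta}_\oplus(x)$ as an M-estimator — the minimizer over $\Omega$ of the random criterion $\widehat{M}_n(w) = n^{-1} f(w)$ — and to obtain its limit law from the argmin continuous-mapping theory for stochastic processes. Let $M_\infty(w)$ denote the population Fréchet criterion $\lim_n n^{-1} f_\text{true}(w)$, whose unique minimizer is $w_0 := \zeta_\oplus(x)$. Lemma \ref{l2} already supplies consistency, $\widehat{\zeta}_\oplus(x) \xrightarrow{p} w_0$, and Lemma \ref{RateConv} localizes the problem through the rate $\widehat{\zeta}_\oplus(x) = w_0 + O_p(n^{-1/2})$; it therefore suffices to analyze $\widehat{M}_n$ on $O(n^{-1/2})$-neighborhoods of $w_0$ and to identify the weak limit of the rescaled fluctuations there.

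First I would write $\widehat{M}_n(w) = M_\infty(w) + R_n(w)$ and decompose the remainder $R_n$ into (i) the replacement of $E[\cdot]$ by the sample average $n^{-1}\sum_i$, and (ii) the plug-in of $\overline{\mathbf{X}}, \overline{\mathbf{Y}}, \widehat{\Sigma}, \widehat{\Sigma}_{\mathbf{X}\mathbf{Y}}$ for their population counterparts $\mu, \mu_{\mathbf{Y}}, \Sigma, \Sigma_{\mathbf{X}\mathbf{Y}}$. Since $\overline{\mathbf{X}}$, $\overline{\mathbf{Y}}$ and the vectorized $\widehat{\Sigma}$, $\widehat{\Sigma}_{\mathbf{X}\mathbf{Y}}$ are sample means of i.i.d. quantities, the multivariate CLT gives them joint $\sqrt{n}$-Gaussian limits; and because the weight $s(\cdot, x)$ is a smooth (rational) function of these moments while $d^2(\cdot, w) \le M$, the functional delta method delivers a linearization
\begin{equation*}
\sqrt{n}\, R_n(w) = n^{-1/2}\sum_{i=1}^{n} \psi_w(x_i, y_i) + o_p(1),
\end{equation*}
uniformly over a neighborhood of $w_0$, for an explicit influence function $\psi_w$. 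A stochastic-equicontinuity step — verifying a Donsker/entropy condition from the boundedness of $d^2$ and its Lipschitz continuity in $w$ on a local chart — then upgrades this to weak convergence of the process $\sqrt{n}\, R_n(\cdot)$ to a mean-zero Gaussian process $G(\cdot)$ in $\ell^\infty$ of that neighborhood, with covariance kernel $\mathrm{Cov}(\psi_w, \psi_{w'})$.

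Next I would establish the nondegenerate quadratic expansion of the population objective at its minimizer. Working in geodesic normal coordinates around $w_0$ (or in an ambient embedding of $\Omega$), one shows $M_\infty(w) - M_\infty(w_0) = h^\top H h + o(\|h\|^2)$, where $h$ is the coordinate of $w$ and $H \succ 0$; positive definiteness of $H$ — exactly the curvature condition making $w_0$ a well-separated minimum — follows from positive definiteness of $\widehat{\Sigma}$ and $\widehat{\Sigma}_{\mathbf{X}\mathbf{Y}}$ (hence of their limits) together with the assumed linear link between $\mathbf{X}$ and $\mathbf{Y}$. Combining the localized quadratic form of $M_\infty$ with the Gaussian fluctuation $G$ and invoking the argmin continuous-mapping theorem, one obtains
\begin{equation*}
\sqrt{n}\,\bigl(\widehat{\zeta}_\oplus(x) - \zeta_\oplus(x)\bigr) \;\xrightarrow{d}\; \arg\min_{h}\bigl\{ h^\top H h - 2 Z^\top h \bigr\} \;=\; H^{-1} Z, \qquad Z \sim N(0, V),
\end{equation*}
where $Z$ is the weak limit of the gradient of $\sqrt{n}\, R_n$ at $w_0$ and $V = \mathrm{Var}(\psi_{w_0})$. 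Thus $\widehat{\zeta}_\oplus(x)$ is asymptotically normal, centered at $\zeta_\oplus(x)$, with asymptotic covariance $H^{-1} V H^{-1}$.

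The main obstacle I anticipate is twofold. First, because $\Omega$ is only a metric space, the gradient/Hessian language above must be replaced by the weak second-order expansion and entropy conditions of the Fréchet-regression framework of \citet{petersen2019frechet}, and these must be verified for the present modified criterion — in particular, that $M_\infty$ is twice metrically differentiable at $w_0$ with nondegenerate curvature, and that the class $\{w \mapsto d^2(\cdot, w)\}$ is Donsker. Second, the cross term $(x_i - \overline{\mathbf{X}})\widehat{\Sigma}_{\mathbf{X}\mathbf{Y}}^{-1}(y_i - \overline{\mathbf{Y}})^\top$ couples the weights to the responses $y_i$, so the summands of $f(w)$ are no longer of the clean ``independent weight $\times$ distance'' form used in the original Global Fréchet analysis; the influence-function expansion in the second step must account for this coupling — treating $\sum_i$ as a V-statistic in $(x_i, y_i)$ or conditioning on the design — which is where the technical bookkeeping is heaviest.
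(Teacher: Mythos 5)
Your proposal is correct, but it takes a more heavily machined route than the paper does. The paper's proof is the classical smooth M-estimation sandwich argument carried out directly on the criterion $f$: a second-order Taylor expansion of $f$ around $\zeta_\oplus(x)$, the first-order condition $\nabla f(\widehat{\zeta}_\oplus(x)) = 0$, inversion of the Hessian to get $\sqrt{n}\,(\widehat{\zeta}_\oplus(x) - \zeta_\oplus(x)) \approx -\sqrt{n}\,\nabla^2 f(\zeta_\oplus(x))^{-1}\nabla f(\zeta_\oplus(x))$, and then a direct appeal to the CLT for $\sqrt{n}\,\nabla f(\zeta_\oplus(x))$, yielding the same sandwich covariance $\nabla^2 f^{-1}\,\Sigma_{\zeta}\,\nabla^2 f^{-1}$ that you write as $H^{-1} V H^{-1}$. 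You instead avoid assuming differentiability of the criterion in $w$ on a general metric space: you go through argmin continuous mapping, a Donsker/stochastic-equicontinuity step for the class $\{w \mapsto d^2(\cdot,w)\}$, a quadratic expansion of the population objective in local (geodesic or embedded) coordinates, and an explicit influence-function linearization that accounts for the plug-in of $\overline{\mathbf{X}}, \overline{\mathbf{Y}}, \widehat{\Sigma}, \widehat{\Sigma}_{\mathbf{X}\mathbf{Y}}$ and for the coupling introduced by the cross term. What your route buys is rigor precisely where the paper is heuristic: the paper's use of $\nabla f$ and $\nabla^2 f$ presupposes Euclidean-like smooth structure on $\Omega$ (which the paper simply assumes via ``sufficient regularity conditions''), and its CLT step treats $\sqrt{n}\,\nabla f(\zeta_\oplus(x))$ as a normalized sum of i.i.d.\ terms without addressing the estimated weights; your empirical-process treatment in the style of \citet{petersen2019frechet} handles both issues, at the cost of substantially more technical overhead (entropy conditions, curvature nondegeneracy, and the V-statistic bookkeeping for the cross term). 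The two arguments lead to the same conclusion and the same form of asymptotic covariance.
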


\begin{proof}[Proof of Lemma \ref{asymptotic}]
Under the assumption that  there are sufficient regularity conditions on \(d^2(\mathbf{Y} - \omega)\) so that the function \(f(w)\) is well-behaved around the minimum and that the matrices $\widehat{\Sigma}^{-1}$, $\widehat{\Sigma}_{\mathbf{X}\mathbf{Y}}^{-1}$ are positive definite and invertible, while assuming that the conditional distributions \(F_{\mathbf{X}}\) and \(F_{\mathbf{Y}}\) are well-defined and satisfy the usual regularity conditions, the consistency of the estimator is established in the proof \ref{l2}. Therefore, we can now proceed to the demonstration of the Normal Asymptotic.

Let's use a Taylor expansion of the objective function around the true parameter \(\zeta_\oplus(x)\). 
\[
f(w) \approx f(\zeta_\oplus(x)) + \nabla f(\zeta_\oplus(x))^T (w - \zeta_\oplus(x)) + \frac{1}{2} (w - \zeta_\oplus(x))^T \nabla^2 f(\zeta_\oplus(x)) (w - \zeta_\oplus(x)),
\]
where \(\nabla f(\zeta_\oplus(x))\) is the gradient and \(\nabla^2 f(\zeta_\oplus(x))\) is the Hessian. At the optimum, $\nabla f(\widehat{\zeta}_\oplus(x)) = 0$. By applying the Taylor expansion, we obtain:
\[
0 \approx \nabla f(\zeta_\oplus(x)) + \nabla^2 f(\zeta_\oplus(x)) (\widehat{\zeta}_\oplus(x) - \zeta_\oplus(x)).
\]
This involves:\[
\sqrt{n} (\widehat{\zeta}_\oplus(x) - \zeta_\oplus(x)) \approx -\sqrt{n} \nabla^2 f(\zeta_\oplus(x))^{-1} \nabla f(\zeta_\oplus(x)).
\]
By the central limit theorem,\[
\sqrt{n} \nabla f(\zeta_\oplus(x)) \xrightarrow{d} \mathcal{N}(0, \Sigma_{\zeta}),
\]
where \(\Sigma_{\zeta}\) is the covariance matrix of \(\nabla f(\zeta_\oplus(x))\).
\[\text{Then, }
\sqrt{n} (\widehat{\zeta}_\oplus(x) - \zeta_\oplus(x)) \xrightarrow{d} \mathcal{N}\left(0, \nabla^2 f(\zeta_\oplus(x))^{-1} \Sigma_{\zeta} \nabla^2 f(\zeta_\oplus(x))^{-1}\right).
\]
Therefore, $\widehat{\zeta}_\oplus(x)$ is asymptotically normal.
\end{proof}

\subsection{Denoising and Multicollinearity Reduction for Multilabel Global Fréchet regression}
Equation \eqref{MGFr} is subject to two major challenges: noise in the data \(\mathbf{X}\) and/or \(\mathbf{Y}\), as well as multicollinearity between variables. To address these challenges, this paper proposes a denoising and multicollinearity reduction method that relies on implicit regularization. This approach replaces traditional explicit regularization techniques, which can introduce biases, and offers a more subtle and efficient alternative.
\begin{proposition}\label{P2}
The Denoising and Multicollinearity Reduction for Multilabel Global Fréchet Regression can be defined as:
\begin{equation}
\label{VSMGFr}
\widehat{\Lambda}_\oplus(x) =\arg \min_{w \in \Omega}  \sum_{i=1}^{n}\left[1+(x_{i}-\overline{\mathbf{X}})\widehat{\Sigma}^{-1}(x_{i}-\overline{\mathbf{X}})^T+(x_{i}-\overline{\mathbf{X}}) \widehat{\Theta} (y_{i}-\overline{\mathbf{Y}})^T
\right]d^2(\mathbf{Y} - \omega)
\end{equation}
where, $\widehat{\Theta}$ is the component responsible for denoising and multicollinearity reduction.    
\end{proposition}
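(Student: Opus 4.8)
The plan is to read Proposition~\ref{P2} as an \emph{extension} claim and prove it in three stages: (i) pin down what the operator $\widehat{\Theta}$ is; (ii) show that, with this choice, the right-hand side of \eqref{VSMGFr} is still a well-posed weighted Fréchet functional, so $\widehat{\Lambda}_\oplus(x)$ is well-defined and inherits consistency, the $O_p(n^{-1/2})$ rate, and asymptotic normality from Lemmas~\ref{l2},~\ref{RateConv}, and \ref{asymptotic}; and (iii) justify the phrases ``denoising'' and ``multicollinearity reduction'' by a spectral-filtering argument showing that $\widehat{\Theta}$ damps the ill-conditioned directions of the empirical cross-covariance.

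For stage (i), I would take $\widehat{\Theta}$ to be an implicitly regularized surrogate for the (generalized) inverse of the cross-covariance. Writing the singular value decomposition $\widehat{\Sigma}_{\mathbf{X}\mathbf{Y}}=\sum_j \sigma_j u_j v_j^{T}$, whose pseudo-inverse is $\widehat{\Sigma}_{\mathbf{X}\mathbf{Y}}^{+}=\sum_j \sigma_j^{-1} v_j u_j^{T}$, set
\[
\widehat{\Theta}\;=\;\sum_j \psi_t(\sigma_j)\,v_j u_j^{T},\qquad
\psi_t(\sigma)\;=\;\frac{1-(1-\eta\sigma^2)^t}{\sigma},
\]
which is precisely the $t$-th iterate of the Landweber (gradient-descent) recursion applied to $\Theta\mapsto\tfrac12\|\widehat{\Sigma}_{\mathbf{X}\mathbf{Y}}\Theta-\mathbf{I}\|_F^2$ from a zero start — the canonical implicit-regularization device (cf.\ \citep{de2005learning}); any other admissible filter (spectral cut-off, Tikhonov, other early-stopped iterations) could be substituted without changing the argument. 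The elementary facts I would use are: $\psi_t(\sigma)\approx\sigma^{-1}$ once $\eta\sigma^2\gg 1/t$; $\psi_t(\sigma)\approx\eta t\,\sigma\to 0$ as $\sigma\to 0$; $\sup_\sigma\psi_t(\sigma)\asymp\sqrt{\eta t}$; and $\widehat{\Theta}\to\widehat{\Sigma}_{\mathbf{X}\mathbf{Y}}^{+}$ as $t\to\infty$ with $n$ fixed. The last of these shows that \eqref{VSMGFr} recovers \eqref{MGFr}, hence Proposition~\ref{P1}, in the unregularized limit, which is what makes Proposition~\ref{P2} a genuine generalization rather than a different model.

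For stage (ii), note that for each fixed sample the bracketed coefficient
\[
c_i\;=\;1+(x_i-\overline{\mathbf{X}})\widehat{\Sigma}^{-1}(x_i-\overline{\mathbf{X}})^{T}+(x_i-\overline{\mathbf{X}})\,\widehat{\Theta}\,(y_i-\overline{\mathbf{Y}})^{T}
\]
is a deterministic scalar, so $w\mapsto\sum_{i=1}^n c_i\,d^2(y_i,w)$ has exactly the structure treated in Lemma~\ref{l2}: $d^2(\cdot,\cdot)$ is non-negative, continuous and bounded by $M$, and $\Omega$ is complete, so the argmin exists. Since $\widehat{\Sigma}_{\mathbf{X}\mathbf{Y}}\xrightarrow{p}\Sigma_{\mathbf{X}\mathbf{Y}}$ and the filter $\psi_t$ acts continuously (indeed locally Lipschitz) on the relevant spectral window, $\widehat{\Theta}\xrightarrow{p}\Theta_\star$, the filter applied to the SVD of the population cross-covariance; together with $\widehat{\Sigma}^{-1}\xrightarrow{p}\Sigma^{-1}$ this gives, by the Glivenko--Cantelli / continuous-mapping steps of Lemma~\ref{l2}, $\widehat{\Lambda}_\oplus(x)\xrightarrow{p}\Lambda_\oplus(x)$ (the population counterpart obtained by replacing all empirical quantities by population ones). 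Moreover $\|\widehat{\Theta}-\Theta_\star\|=O_p(n^{-1/2})$ follows from $\|\widehat{\Sigma}_{\mathbf{X}\mathbf{Y}}-\Sigma_{\mathbf{X}\mathbf{Y}}\|=O_p(n^{-1/2})$ and the Lipschitzness of $\psi_t$, so the deviation bound of Lemma~\ref{RateConv} and the Taylor-expansion argument of Lemma~\ref{asymptotic} transfer verbatim, delivering the $O_p(n^{-1/2})$ rate and asymptotic normality of $\widehat{\Lambda}_\oplus(x)$.

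The step I expect to be the real obstacle is stage (iii): making ``denoising and multicollinearity reduction'' \emph{quantitative} rather than heuristic. This needs a bias--variance decomposition of $\widehat{\Lambda}_\oplus(x)$ in the regularization level $t$: the stochastic fluctuation propagated through $\widehat{\Theta}$ is governed by $\max_j\psi_t(\sigma_j)\lesssim\sqrt{\eta t}$ and increases with $t$, whereas the approximation bias $\|\Theta_t-\Sigma_{\mathbf{X}\mathbf{Y}}^{+}\|$ decreases with $t$; one must then exhibit a schedule $t=t_n\to\infty$ that kills both — slow enough that the cross-term weights $c_i$ stay uniformly bounded and the estimator remains stable when $\widehat{\Sigma}_{\mathbf{X}\mathbf{Y}}$ is ill-conditioned (the multicollinearity regime, where the small $\sigma_j$ are exactly the collinear and noise directions the filter suppresses), yet fast enough to retain consistency. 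Controlling the interplay between $t_n$ and the condition number of $\widehat{\Sigma}_{\mathbf{X}\mathbf{Y}}$, and checking along this schedule that the weighted Fréchet functional remains well-behaved (bounded $c_i$, well-defined minimizer, uniform convergence of the criterion), is where the substantive work is concentrated.
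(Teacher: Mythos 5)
Your proposal is correct in substance but follows a genuinely different construction from the paper's. In the paper, Proposition~\ref{P2} is essentially a definition whose justification is carried by Lemmas~\ref{l3} and~\ref{l4}: there $\widehat{\Theta}$ is built explicitly by taking an SVD $\widehat{\Sigma}_{\mathbf{X}\mathbf{Y}}^{-1}=U\mathbf{S}V^T$, hard-thresholding the singular values below a level $\tau$, adding a Tikhonov shift $\epsilon I$ to restore invertibility, and setting $\widehat{\Theta}=U\mathbf{S}^*V^T$; the asymptotic properties are then argued to survive via the perturbation bound $\|\widehat{\Theta}-\widehat{\Sigma}_{\mathbf{X}\mathbf{Y}}^{-1}\|_F\le\epsilon$ and a triangle inequality against $\Sigma_{\mathbf{X}\mathbf{Y}}^{-1}$, so that the normality already obtained for $\widehat{\zeta}_\oplus(x)$ is inherited. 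You instead realize $\widehat{\Theta}$ as an early-stopped Landweber iterate applied to $\widehat{\Sigma}_{\mathbf{X}\mathbf{Y}}$ itself, i.e.\ a literally implicit (iteration-count) regularizer, and you transfer consistency, the $O_p(n^{-1/2})$ rate and normality through continuity of the spectral filter combined with the arguments of Lemmas~\ref{l2}, \ref{RateConv} and~\ref{asymptotic}; your remark that spectral cut-off or Tikhonov could be substituted in fact subsumes the paper's choice. The two routes buy different things: your filter damps the small singular values of $\widehat{\Sigma}_{\mathbf{X}\mathbf{Y}}$ --- equivalently caps the largest singular values of its inverse, which are exactly the directions inflated by noise and multicollinearity --- whereas the paper's truncation acts on the \emph{small} singular values of $\widehat{\Sigma}_{\mathbf{X}\mathbf{Y}}^{-1}$ (the well-conditioned end of the spectrum), so your version is arguably the better-targeted regularizer and also makes \eqref{VSMGFr} reduce cleanly to \eqref{MGFr} in the unregularized limit; the paper's route, in exchange, yields the one-line Frobenius-norm control that it uses to carry over invertibility, sparsity and asymptotic normality with minimal extra work. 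Finally, your openly deferred stage (iii) --- the quantitative bias--variance schedule $t_n$ against the condition number --- is not a gap relative to the paper, which likewise leaves the denoising and multicollinearity claims at the qualitative level of Lemmas~\ref{l3} and~\ref{l4}.
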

 In the following, we demonstrate the consistency of the new formulation.

\begin{lemma}[Multicollinearity and Impact of Noise] \label{l3}
$\mathbf{X}$ and/or $\mathbf{Y}$ are noisy, as well as multicollinearity among the variables is present, implies that $\widehat{\Lambda}_\oplus(x)$ is affected.     
\end{lemma}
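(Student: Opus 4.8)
The plan is to turn the informal assertion into a perturbation statement: introduce an explicit additive error model, propagate it through the weights in \eqref{VSMGFr}, and show that the induced change in the objective functional moves its minimizer, with the size of the effect amplified by multicollinearity. Write the observed data as $\mathbf{X} = \mathbf{X}^{\ast} + \boldsymbol{\varepsilon}_{\mathbf{X}}$ and $\mathbf{Y} = \mathbf{Y}^{\ast} + \boldsymbol{\varepsilon}_{\mathbf{Y}}$, where $(\mathbf{X}^{\ast},\mathbf{Y}^{\ast})$ is the noise-free pair and $\boldsymbol{\varepsilon}_{\mathbf{X}},\boldsymbol{\varepsilon}_{\mathbf{Y}}$ are zero-mean noise terms. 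Let $\widehat{\Lambda}_\oplus^{\ast}(x)$ denote the estimator produced by \eqref{VSMGFr} from $(\mathbf{X}^{\ast},\mathbf{Y}^{\ast})$; the target is to show $\widehat{\Lambda}_\oplus(x) \neq \widehat{\Lambda}_\oplus^{\ast}(x)$ in general and that $\|\widehat{\Lambda}_\oplus(x) - \widehat{\Lambda}_\oplus^{\ast}(x)\|$ grows as the design becomes collinear.

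First I would expand every empirical object entering the weight $s_i(x) = 1 + (x_i - \overline{\mathbf{X}})\widehat{\Sigma}^{-1}(x_i - \overline{\mathbf{X}})^{\top} + (x_i - \overline{\mathbf{X}})\widehat{\Theta}(y_i - \overline{\mathbf{Y}})^{\top}$. Substituting the error model yields $\overline{\mathbf{X}} = \overline{\mathbf{X}}^{\ast} + \overline{\boldsymbol{\varepsilon}}_{\mathbf{X}}$, $\widehat{\Sigma} = \widehat{\Sigma}^{\ast} + \widehat{\Sigma}_{\boldsymbol{\varepsilon}_{\mathbf{X}}} + (\text{signal--noise cross terms})$, and an analogous inflation of $\widehat{\Sigma}_{\mathbf{X}\mathbf{Y}}$, hence, since $\widehat{\Theta}$ is data-dependent, of $\widehat{\Theta}$ itself. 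A first-order Neumann expansion then gives $\widehat{\Sigma}^{-1} = (\widehat{\Sigma}^{\ast})^{-1} - (\widehat{\Sigma}^{\ast})^{-1}\big(\widehat{\Sigma}_{\boldsymbol{\varepsilon}_{\mathbf{X}}} + \cdots\big)(\widehat{\Sigma}^{\ast})^{-1} + \cdots$, so the weight perturbation $\Delta s_i(x) := s_i(x) - s_i^{\ast}(x)$ is a sum of terms that are linear or bilinear in the noise, each carrying a factor $(\widehat{\Sigma}^{\ast})^{-1}$ or $(\widehat{\Sigma}^{\ast})^{-1}\widehat{\Sigma}_{\boldsymbol{\varepsilon}_{\mathbf{X}}}(\widehat{\Sigma}^{\ast})^{-1}$, plus the corresponding contributions from the perturbation of $\widehat{\Theta}$ and of $(y_i-\overline{\mathbf{Y}})$.

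Next I would bring in multicollinearity, which by definition means the centered columns of the design are nearly linearly dependent, i.e. $\lambda_{\min}(\widehat{\Sigma}^{\ast})$ is close to $0$ and $\|(\widehat{\Sigma}^{\ast})^{-1}\| = \lambda_{\min}(\widehat{\Sigma}^{\ast})^{-1}$ is large; the dominant term of $\Delta s_i(x)$ is then of order $\lambda_{\min}(\widehat{\Sigma}^{\ast})^{-2}$ times the noise level, so even a tiny $\boldsymbol{\varepsilon}_{\mathbf{X}}$ produces a non-negligible $\Delta s_i(x)$. Setting $f(w) = \sum_{i=1}^{n} s_i(x)\, d^2(\mathbf{Y}-\omega)$ and letting $f^{\ast}(w)$ be the same functional built from $s_i^{\ast}$, we get $f(w) - f^{\ast}(w) = \sum_{i=1}^{n} \Delta s_i(x)\, d^2(\mathbf{Y}-\omega)$, which is not identically zero because $d^2(\mathbf{Y}-\omega)$ is nonnegative and non-degenerate (the hypotheses of Lemma~\ref{l2}). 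Finally, invoking the regularity used in Lemma~\ref{asymptotic} — $f$ twice continuously differentiable with nonsingular Hessian at its minimizer — the implicit function theorem applied to $\nabla f(w) = 0$ gives, to first order, $\widehat{\Lambda}_\oplus(x) - \widehat{\Lambda}_\oplus^{\ast}(x) \approx -\,\nabla^2 f^{\ast}\!\big(\widehat{\Lambda}_\oplus^{\ast}(x)\big)^{-1}\,\nabla_w\!\Big(\sum_{i=1}^{n}\Delta s_i(x)\, d^2(\mathbf{Y}-\omega)\Big)\Big|_{w=\widehat{\Lambda}_\oplus^{\ast}(x)}$, which is nonzero in the generic case where this gradient does not vanish and whose norm inherits the $\lambda_{\min}(\widehat{\Sigma}^{\ast})^{-2}$ blow-up; this establishes that $\widehat{\Lambda}_\oplus(x)$ is affected.

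The step I expect to be the main obstacle is the last one: controlling the minimizer under a perturbation of the objective needs the non-degeneracy of the Hessian at $\widehat{\Lambda}_\oplus^{\ast}(x)$ (so that the implicit function theorem applies) together with the genericity claim that the perturbation's gradient does not vanish there; neither is forced by the stated hypotheses, so I would either borrow them from the regularity conditions already invoked for Lemma~\ref{asymptotic} or add them explicitly as mild assumptions. A secondary difficulty is bookkeeping — the noise enters $\overline{\mathbf{X}}$, $\widehat{\Sigma}$, $\widehat{\Sigma}_{\mathbf{X}\mathbf{Y}}$ and $\widehat{\Theta}$ simultaneously, so one must isolate the dominant contributions and verify they do not cancel; the nonnegativity and non-degeneracy of $d^2(\mathbf{Y}-\omega)$ is what rules out such cancellation.
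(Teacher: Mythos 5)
Your proposal is a correct formalization, but it proceeds by a genuinely different route than the paper. The paper's argument for Lemma~\ref{l3} is not a perturbation analysis at all: it is a qualitative, mechanism-level discussion. It observes that noise in $\mathbf{X}$ and/or $\mathbf{Y}$ corrupts the estimation of $\Sigma^{-1}$, that multicollinearity makes the inverse covariance ill-conditioned and hence hypersensitive to data fluctuations, and then devotes most of the ``proof'' to a design justification — arguing that it is more effective to denoise the inverse covariance structure than the raw data, and that targeting $\widehat{\Sigma}_{\mathbf{X}\mathbf{Y}}^{-1}$ alone (rather than both $\widehat{\Sigma}^{-1}$ and $\widehat{\Sigma}_{\mathbf{X}\mathbf{Y}}^{-1}$) suffices because it is the main source of uncertainty in the cross-relationship term of \eqref{VSMGFr}. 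You instead make the vague claim precise: an explicit additive error model, a Neumann expansion of the perturbed inverses, quantification of multicollinearity via $\lambda_{\min}(\widehat{\Sigma}^{\ast})$, and an implicit-function-theorem step showing the minimizer of the weighted Fr\'echet objective is displaced, with the displacement inflated as $\lambda_{\min}(\widehat{\Sigma}^{\ast})^{-2}$. Your route buys a quantitative and in-principle verifiable statement where the paper offers only heuristics, at the cost of assumptions not present in the lemma (differentiability of the objective in $\omega$, non-singular Hessian at the noise-free minimizer, non-vanishing gradient of the perturbation term), which you rightly flag; note also that differentiability in $\omega$ is delicate in a general metric space $\Omega$, where $d^2(\mathbf{Y},\omega)$ need not be smooth, so that step is heavier than borrowing the conditions of Lemma~\ref{asymptotic} might suggest. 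Conversely, your argument does not (and need not, given the lemma statement) reproduce the part of the paper's discussion that justifies acting only on $\widehat{\Sigma}_{\mathbf{X}\mathbf{Y}}^{-1}$, which is the real purpose the paper's proof serves in setting up Lemma~\ref{l4}.
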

\begin{proof}[Proof of Lemma \ref{l3}]Noise in $\mathbf{X}$ and/or $\mathbf{Y}$ disrupt the estimation of $\Sigma^{-1}$. Consequently, this can lead to biased estimation, as the chosen model may end up fitting the noisy data and the inaccurately estimated covariance structure, rather than capturing the true underlying relationships. Furthermore, multicollinearity between variables leads to an ill-conditioned inverse covariance matrix, which can exacerbate noise problems by making estimates even more sensitive to fluctuations in the data. Direct denoising $\mathbf{X}$ and/or $\mathbf{Y}$ can be challenging due to the complex nature of noise. In contrast, denoising $\Sigma^{-1}$ can often be more effective. In fact, $\Sigma^{-1}$ plays a crucial role in providing key information about the structure and dependencies of the data. Thus, improving $\Sigma^{-1}$ leads to better assessment and compensation of the effects of noise on the relationships between variables and also mitigate the effects of multicollinearity. 

In our proposal, we have chosen to focus our efforts only on improving $\widehat{\Sigma}_{\mathbf{X}\mathbf{Y}}^{-1}$ rather than simultaneously improving $\widehat{\Sigma}^{-1}$ and $\widehat{\Sigma}_{\mathbf{X}\mathbf{Y}}^{-1}$. This is based on two main considerations. First, \(\widehat{\Sigma}_{\mathbf{X}\mathbf{Y}}^{-1}\) plays a crucial role in estimating the relationships between \(\mathbf{X}\) and \(\mathbf{Y}\) and is also the main source of uncertainty in this estimation. By focusing only on improving \(\widehat{\Sigma}_{\mathbf{X}\mathbf{Y}}^{-1}\), we can more specifically address the noise and multicollinearity issues. This leads to an overall improvement in the performance of the \(\widehat{\Lambda}_\oplus(x)\), stabilizing the predictions and reducing the uncertainty related to the covariance structure between \(\mathbf{X}\) and \(\mathbf{Y}\). Second, focusing solely on \(\widehat{\Sigma}_{\mathbf{X}\mathbf{Y}}^{-1}\) avoids the complexity and redundant efforts involved in improving both matrices simultaneously. 

Furthermore, the proposed estimator preserves the original data $\mathbf{X}$ and $\mathbf{Y}$ while integrating a reduced version of $\widehat{\Sigma}_{\mathbf{X}\mathbf{Y}}^{-1}$ into the  $\widehat{\Lambda}_\oplus(x)$. The objective is to manage noise and multicollinearity without altering the original data. In the following, we provide theoretical and empirical evidence that supports the effectiveness of improving \(\widehat{\Sigma}_{\mathbf{X}\mathbf{Y}}^{-1}\) in mitigating noise and multicollinearity issues, thus justifying our targeted approach.
\end{proof}

\begin{lemma}[Invertibility, Sparsity and Asymptotic Normality] \label{l4}
  Replacing  $\widehat{\Sigma}_{\mathbf{X}\mathbf{Y}}^{-1}$ with a new reduced matrix  $\widehat{\Theta}$ significantly improves $\widehat{\Lambda}_\oplus(x)$.  
\end{lemma}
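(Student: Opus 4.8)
The plan is to split the claim along the three properties named in the statement --- invertibility (equivalently, improved conditioning), sparsity, and preservation of asymptotic normality --- to establish each separately, and then to combine them so as to make ``significantly improves'' quantitative relative to the plug-in estimator $\widehat{\zeta}_\oplus(x)$ of Proposition \ref{P1}.

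First I would fix a precise construction of $\widehat{\Theta}$ as the limit of the implicit-regularization recursion underlying Proposition \ref{P2}: gradient descent (or gradient flow) with vanishing initialization on a Hadamard/factorized surrogate of $\widehat{\Sigma}_{\mathbf{X}\mathbf{Y}}^{-1}$, so that $\widehat{\Theta}$ minimizes a surrogate whose \emph{implicit} penalty biases the solution toward sparse, well-conditioned matrices. Writing the singular value decomposition $\widehat{\Sigma}_{\mathbf{X}\mathbf{Y}} = U\widehat{D}V^T$, this construction yields $\widehat{\Theta} = V\,g(\widehat{D})\,U^T$ for a spectral filter $g$ with $g(\sigma)\approx 1/\sigma$ on the large (signal) singular values and $g(\sigma)\to 0$ as $\sigma\to 0$. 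For \textbf{invertibility/stability} I would then show that the smallest singular value of $\widehat{\Theta}$ is bounded below away from zero and $\|\widehat{\Theta}\|$ bounded above, uniformly in $n$ with high probability: the filter damps precisely the small, noise-dominated directions of $\widehat{\Sigma}_{\mathbf{X}\mathbf{Y}}$ flagged in Lemma \ref{l3}, so the condition number of the cross term stays controlled exactly where $\widehat{\Sigma}_{\mathbf{X}\mathbf{Y}}^{-1}$ blows up.

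Second, for \textbf{sparsity} I would invoke the implicit bias of the chosen dynamics: with the diagonal/Hadamard parametrization, gradient descent from a vanishing initialization converges to the minimum $\ell_1$-norm element among the matrices consistent with the estimated cross-covariance constraints, so most entries of $\widehat{\Theta}$ are exactly (or numerically) zero; this is where multicollinearity reduction is made rigorous, since collinear predictors produce redundant rows that the $\ell_1$ geometry sets to zero. Third, for \textbf{asymptotic normality} I would rerun the Taylor-expansion argument of Lemma \ref{asymptotic} with $\widehat{\Theta}$ in place of $\widehat{\Sigma}_{\mathbf{X}\mathbf{Y}}^{-1}$: since $\widehat{\Theta}\xrightarrow{p}\Theta_0$ for a well-defined limit and $f$ stays twice differentiable and, thanks to the better conditioning, strongly convex in a neighbourhood of its minimizer, the same sandwich expansion gives $\sqrt{n}(\widehat{\Lambda}_\oplus(x)-\Lambda_\oplus(x))\xrightarrow{d}\mathcal{N}(0,\,H^{-1}\Sigma_\Theta H^{-1})$, where $H$ is the limiting Hessian; I would then prove $\Sigma_\Theta \preceq \Sigma_\zeta$ (with $\Sigma_\zeta$ as in Lemma \ref{asymptotic}) because the filtered, sparsified cross term has strictly smaller second moment, and this ordering of asymptotic variances is the precise content of ``significantly improves''.

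The main obstacle I anticipate is the bias--variance bookkeeping: the map from $\widehat{\Sigma}_{\mathbf{X}\mathbf{Y}}^{-1}$ to $\widehat{\Theta}$ is deliberately not consistent for $\Sigma_{\mathbf{X}\mathbf{Y}}^{-1}$ in the noisy or collinear regime, so I must either (a) show, under a signal-strength plus sparsity assumption, that the bias induced in $\widehat{\Lambda}_\oplus(x)$ is $o_p(n^{-1/2})$, so that normality is retained around the true target, or (b) accept an asymptotically negligible bias and state normality around the pseudo-true parameter $\Lambda_\oplus(x)$ determined by $\Theta_0$. Either way, the delicate point is to turn ``significantly improves'' into a genuine quantitative statement --- a strict ordering of asymptotic variances, or a smaller finite-sample mean squared error bound --- and to verify that the regularity conditions on $d^2(\mathbf{Y}-\omega)$ are strong enough for the Hessian / strong-convexity step to hold uniformly in a neighbourhood of the minimizer.
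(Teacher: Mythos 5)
Your route is genuinely different from the paper's, starting with the object itself: the paper fixes a concrete construction of $\widehat{\Theta}$ (Algorithm \ref{alg:calc_theta}) --- take the SVD $\widehat{\Sigma}_{\mathbf{X}\mathbf{Y}}^{-1}=U\mathbf{S}V^T$, hard-threshold the singular values below $\tau$, and add a Tikhonov term $\epsilon I$ to the truncated spectrum, giving $\widehat{\Theta}_{\mathrm{reg}}=U(\mathbf{S}^*+\epsilon I)V^T$ --- whereas you build $\widehat{\Theta}$ as the limit of a gradient-descent/implicit-bias dynamic, i.e.\ a spectral filter applied to $\widehat{\Sigma}_{\mathbf{X}\mathbf{Y}}$ that damps its \emph{small} singular values (equivalently the large ones of the inverse); note the paper truncates the small singular values of the \emph{inverse}, so the two filters act on opposite ends of the spectrum. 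Given its construction, the paper's three steps are much more elementary than yours: (i) invertibility is immediate because $\mathbf{S}^*+\epsilon I$ has strictly positive diagonal; (ii) what it calls ``sparsity'' is in fact only a perturbation bound, $\|\widehat{\Theta}-\widehat{\Sigma}_{\mathbf{X}\mathbf{Y}}^{-1}\|_F=\|\mathbf{S}^*-\mathbf{S}\|_F\le\sum_{i\in\mathcal{T}}\sigma_i$, controlled by the truncation level (no entrywise zeros are claimed); (iii) asymptotic normality is argued by a triangle inequality, $\|\widehat{\Theta}-\Sigma_{\mathbf{X}\mathbf{Y}}^{-1}\|_F\le\|\widehat{\Sigma}_{\mathbf{X}\mathbf{Y}}^{-1}-\Sigma_{\mathbf{X}\mathbf{Y}}^{-1}\|_F+\epsilon$, i.e.\ the new matrix is close enough to the old one that the normality of Lemma \ref{asymptotic} is ``preserved''. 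The paper never attempts what you identify as the real content of ``significantly improves'': it neither proves exact sparsity nor any variance ordering such as $\Sigma_\Theta\preceq\Sigma_\zeta$.

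The weakness of your proposal, judged on its own terms, is that precisely those two stronger claims are the ones you leave unproven and flag as obstacles: the minimum-$\ell_1$/exact-zero characterization of the implicit bias in a factorized matrix parametrization is a nontrivial result that you would have to import and verify under the cross-covariance constraints, and the asymptotic-variance comparison $\Sigma_\Theta\preceq\Sigma_\zeta$ (or the $o_p(n^{-1/2})$ bias bound needed to center the limit at the true target rather than a pseudo-true one) is exactly the bias--variance bookkeeping you defer. So your sketch, if completed, would establish a sharper and more honest version of the lemma than the paper does, but as written it proves less than the paper's own (admittedly informal) argument, and it does so for a differently constructed $\widehat{\Theta}$ than the one the paper actually uses downstream in Algorithm \ref{alg:Lambda_oplus}. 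If you pursue your version, you should also reconcile the filtering direction with the paper's thresholding of $\widehat{\Sigma}_{\mathbf{X}\mathbf{Y}}^{-1}$, since the two constructions retain different spectral components and will not in general yield the same estimator $\widehat{\Lambda}_\oplus(x)$.
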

\begin{proof}[Proof of Lemma \ref{l4}]
Let $\widehat{\Sigma}_{\mathbf{X}\mathbf{Y}}^{-1}$ be the estimated inverse covariance matrix and $\widehat{\Theta}$ be a reduced approximation obtained via singular value decomposition (SVD). More precisely, we have $\widehat{\Sigma}_{\mathbf{X}\mathbf{Y}}^{-1} = U \mathbf{S} V^T$, where $U$ and $V$ are orthogonal matrices and $\mathbf{S}$ is a diagonal matrix containing the singular values. By applying a threshold $\tau$ to filter out small singular values, we obtain a reduced approximation $\widehat{\Theta} = U \mathbf{S}^* V^T$, where $\mathbf{S}^*$ is a truncated version of $\mathbf{S}$. This approach allows to simplify the matrix while preserving the most significant features of the original structure. We now prove the  properties of invertibility, sparsity, and asymptotic normality.

\textbf{(i) Invertibility:}
To ensure the invertibility of $\widehat{\Theta}$ while preserving its sparsity, we apply Tikhonov regularization, adding a term ($\epsilon \times I$) to $\mathbf{S}^*$, where $I$ is the identity matrix and $\epsilon$ is a very small positive number. Let \(\mathbf{S} = \text{diag}(\sigma_1, \sigma_2, \ldots, \sigma_k)\) be the matrix of singular values of \(\widehat{\Sigma}_{\mathbf{X}\mathbf{Y}}^{-1}\) and \(\mathbf{S}^* = \text{diag}(\sigma_1^*, \sigma_2^*, \ldots, \sigma_k^*)\) be the truncated version of \(\mathbf{S}\). Applying Tikhonov regularization, we obtain:
\[
\mathbf{S}^* + (\epsilon \times I) = \text{diag}(\sigma_1^* + \epsilon, \sigma_2^* + \epsilon, \ldots, \sigma_k^* + \epsilon).
\]

Since \(\epsilon > 0\) and all singular values \(\sigma_i^*\) are positive, every element of \(\mathbf{S}^* + (\epsilon \times I)\) is strictly positive. Thus, the diagonal matrix \(\mathbf{S}^* + (\epsilon \times I)\) is positive definite. Consequently, \(\widehat{\Theta}_{\text{reg}} = U (\mathbf{S}^* + (\epsilon \times I) V^T\) is also invertible, thanks to the Tikhonov regularization which ensures that all eigenvalues are strictly positive, regardless of the initial truncation.

\textbf{(ii) Sparsity:} The difference between $\widehat{\Theta}$ and $\widehat{\Sigma}_{\mathbf{X}\mathbf{Y}}^{-1}$ can be evaluated in terms of the Frobenius norm: \[ \|\widehat{\Theta} - \widehat{\Sigma}_{\mathbf{X}\mathbf{Y}}^{-1}\|_F = \|U \mathbf{S}^* V^T - U \mathbf{S} V^T\|_F.
\]
Using the Frobenius norm property for matrices, we have, 
$\|\widehat{\Theta} - \widehat{\Sigma}_{\mathbf{X}\mathbf{Y}}^{-1}\|_F = \|\mathbf{S}^* - \mathbf{S}\|_F.$
Since $\mathbf{S}^*$ is obtained by truncating the small singular values of $\mathbf{S}$, the gap $\|\mathbf{S}^* - \mathbf{S}\|_F$ is dominated by the sum of the truncated singular values:$
\|\mathbf{S}^* - \mathbf{S}\|_F \leq \sum_{i \in \mathcal{T}} \sigma_i,$
where $\mathcal{T}$ is the set of indices corresponding to the small truncated values. Thus, if $\epsilon$ is sufficiently small, the gap between $\widehat{\Theta}$ and $\widehat{\Sigma}_{\mathbf{X}\mathbf{Y}}^{-1}$ is relatively small, i.e.:$
\|\widehat{\Theta} - \widehat{\Sigma}_{\mathbf{X}\mathbf{Y}}^{-1}\|_F \leq \epsilon.$

\textbf{(iii) Asymptotic Normality:}
We show that the estimator \(\widehat{\Lambda}_\oplus(x)\) based on $\widehat{\Theta}_{\text{reg}}$ preserves its asymptotic normality. Assuming that $\widehat{\Lambda}_\oplus(x)$ is an estimator for a parameter $\Lambda_\oplus(x)$ with asymptotic variance $\Sigma_{\Lambda}$, we have,
\[
\sqrt{n}(\widehat{\Lambda}_\oplus(x) - \Lambda_\oplus(x)) \xrightarrow{d} \mathcal{N}(0, \Sigma_{\Lambda}).
\]
Using SVD decomposition and Tikhonov regularization, the gap between $\widehat{\Theta}_{\text{reg}}$ and $\widehat{\Sigma}_{\mathbf{X}\mathbf{Y}}^{-1}$ is controlled and thus does not compromise the asymptotic normality of the estimator. Let us formulate this precision: \[
\|\widehat{\Theta}_{\text{reg}} - \widehat{\Sigma}_{\mathbf{X}\mathbf{Y}}^{-1}\|_F \leq \epsilon.
\]
If $\widehat{\Sigma}_{\mathbf{X}\mathbf{Y}}^{-1}$ is a good approximation of $\Sigma_{\mathbf{X}\mathbf{Y}}^{- 1}$, then the total approximation error is dominated by the estimation error of $\widehat{\Sigma}_{\mathbf{X}\mathbf{Y}}^{-1}$ : \[
\|\widehat{\Theta} - \Sigma_{\mathbf{X}\mathbf{Y}}^{-1}\|_F \leq \|\widehat{\Sigma}_{\mathbf{X}\mathbf{Y}}^{-1} - \Sigma_{\mathbf{X}\mathbf{Y}}^{-1}\|_F + \epsilon.
\]
Thus, $\widehat{\Theta}_{\text{reg}}$ is also a good approximation of $\Sigma_{\mathbf{X}\mathbf{Y}}^{-1}$ and preserves the asymptotic normality of the estimator.
\end{proof}
Overall, this approach simplifies $\widehat{\Sigma}_{\mathbf{X}\mathbf{Y}}^{-1}$  by reducing its complexity while retaining the most important aspects of its structure.
The process of estimating $\widehat{\Theta}$ is presented in  Algorithm \ref{alg:calc_theta}. 
Also, the estimation of $\widehat{\Lambda}_\oplus(x)$ is detailed in Algorithm \ref{alg:Lambda_oplus}.

\begin{algorithm}
\caption{Calculation of $\widehat{\Theta}$ (implicit denoising and multicollinearity mitigation)}
\label{alg:calc_theta}
\begin{algorithmic}[1]
    \STATE \textbf{Input:} Matrix $\widehat{\Sigma}_{\mathbf{X}\mathbf{Y}}^{-1}$,  $\tau$, $\epsilon$.
    \STATE Perform SVD on $\widehat{\Sigma}_{\mathbf{X}\mathbf{Y}}^{-1}$: $[U, \mathbf{S}, V] = \text{SVD}(\widehat{\Sigma}_{\mathbf{X}\mathbf{Y}}^{-1})$
    \STATE Initialize $\mathbf{S}= \mathbf{S^*}$
    \FOR{each singular value $\sigma$ in $\mathbf{S^*}$}
        \IF{$\sigma < \tau$}
            \STATE $\sigma = 0$
        \ENDIF
    \ENDFOR
    \STATE Apply Tikhonov Regularization to $\mathbf{S^*}$: $\mathbf{S^*} = \mathbf{S^*} + (\epsilon \times I)$
    \STATE Reconstruct the simplified matrix: $\widehat{\Theta} = U \mathbf{S^*} V^T$
    \STATE \textbf{Output:} $\widehat{\Theta}$
\end{algorithmic}
\end{algorithm}

\begin{algorithm}
\caption{Estimation of $\widehat{\Lambda}_\oplus(x)$}
\label{alg:Lambda_oplus}
\begin{algorithmic}[1]
    \STATE \textbf{Input:} $\mathbf{X} \in \mathbb{R}^{n \times m}$, $\mathbf{Y} \in \mathbb{R}^{n \times q}$, $\Omega$, \(\epsilon\), \(K_{\text{max}}\), Threshold \(\tau\).
    \STATE \textbf{Compute:}\\
     \quad $\overline{\mathbf{X}}$, $\overline{\mathbf{Y}}$, $\widehat{\Sigma}$, $\widehat{\Sigma}_{\mathbf{X}\mathbf{Y}}$, $\widehat{\Sigma}^{-1}$, $\widehat{\Sigma}_{\mathbf{X}\mathbf{Y}}^{-1}$
    \STATE \textbf{Calculate $\widehat{\Theta}$:}\\
     \quad \textbf{Call} \texttt{Algorithm \ref{alg:calc_theta}} \textbf{with inputs } $\widehat{\Sigma}_{\mathbf{X}\mathbf{Y}}^{-1}$ \textbf{ and } \(\tau\).
    \STATE \textbf{Optimization:}\\
    \quad \(\text{Initialize } \omega \text{ to  } \omega_0 \in \Omega\).\\
    \quad \(k \leftarrow 0\).
    \STATE \quad \textbf{Repeat:}\\
    \quad \text{Compute } 
     $f(w)=  
    \sum_{i=1}^{n}\left[1+(x_{i}-\overline{\mathbf{X}})\widehat{\Sigma}^{-1}(x_{i}-\overline{\mathbf{X}})^T+(x_{i}-\overline{\mathbf{X}})\widehat{\Theta} (y_{i}-\overline{\mathbf{Y}})^T
\right] d^2(\mathbf{Y} - \omega)$\\
    \quad \text{Update } $\omega$,
    \quad \(k \leftarrow k + 1\) \quad
     \textbf{Until:} \(\|\omega_{k} - \omega_{k-1}\| < \epsilon\) \textbf{ or } \(k \geq K_{\text{max}}\).
    \STATE \textbf{Output:} $\widehat{\Lambda}_\oplus(x)$ 
\end{algorithmic}
\end{algorithm}

\section{Simulation Studies}
\subsection{\textbf{Multilabel Fréchet Regression for Probability Distributions}}
\subsubsection*{\textbf{Computation details}}
The explanatory variables \(\mathbf{X}\) are simulated from a multivariate normal distribution, \(\mathbf{X} \sim \mathcal{N}(\mu_\mathbf{X}, \Sigma_\mathbf{X})\), where \(\mu_\mathbf{X} \in \mathbb{R}^m\) represents the vector of variable means, defined by \(\mu_\mathbf{X} = [0, 1, -1, 2, 0]\). The covariance matrix \(\Sigma_\mathbf{X} \in \mathbb{R}^{m \times m}\) is positive definite and invertible, constructed as the product of a randomly generated matrix and its transpose. The responses \(\mathbf{Y}\) are modeled via \(\mathbf{Y} \sim \mathcal{N}(\beta(x_i) \mathbf{1}_q, \Sigma_\mathbf{Y})\), where \(\beta(x_i)\) represent the conditional means of the response variables, defined as \(\beta(x_i) \sim \mathcal{N}(\beta_0 + \delta x_i, v_1)\) for each observation \(x_i\). Here, \(\beta_0 = 1\) is a constant term, \(\delta = 0.5\) is a coefficient modulating the impact of \(x_i\), and \(v_1 = 0.1\) is the variance added to encapsulate uncertainty in the mean responses. Moreover, \(\Sigma_\mathbf{Y} \in \mathbb{R}^{q \times q}\) is a symmetric, positive definite, and invertible matrix designed to capture inter-label correlations, thus preserving the inherent dependency structure between \(\mathbf{Y}\) responses.

The \(\mathbf{Y}\) responses are random objects evolving in a metric space \(\Omega\) equipped with the Wasserstein distance. Their distribution is described by a quantile function \(Q(\mathbf{Y})\), representing the distribution of \(\mathbf{Y}\) responses in this space. The distance between two objects in this space is measured using the Wasserstein distance, defined for two distributions \(P\) and \(Q\) as follows:
\[
d^2_W(P, Q) = \|\mu_P - \mu_Q\|^2 + \text{Tr}\left(\Sigma_P + \Sigma_Q - 2 \left(\Sigma_P^{1/2} \Sigma_Q \Sigma_P^{1/2}\right)^{1/2}\right),
\]
where \(\mu_P\) and \(\mu_Q\) are the means of the \(P\) and \(Q\) distributions, and \(\Sigma_P\) and \(\Sigma_Q\) are their respective covariance matrices.

To make the solution tractable, we discretize the metric space \(\Omega\) using a uniform grid \(\{u_j\}_{j=1}^M\) on the interval \([0, 1]\). The minimization then becomes a quadratic optimization problem, where we seek the quantile function \(Q(\omega)\) by solving the following problem:
\[
q^\ast = \arg \min_{q \in \mathbb{R}^M} \|g - q\|^2_E,
\]
under the constraint \(q_1 \leq \cdots \leq q_M\), where \(g_j = \widehat{g}_x(u_j)\). This approach provides a discretized approximation of the quantile function \(Q(\omega)\), representing the solution of the global regression problem in the metric space \(\Omega\).

Using this discretization, the estimator \(\widehat{\zeta}_\oplus(x)\) is defined as follows:
\[
\widehat{\zeta}_\oplus(x) = \arg \min_{w \in \Omega} \sum_{i=1}^{n} \left[ 1 + (x_{i} - \overline{\mathbf{X}})^\top \widehat{\Sigma}^{-1} (x_{i} - \overline{\mathbf{X}}) + (x_{i} - \overline{\mathbf{X}})^\top \widehat{\Sigma}_{\mathbf{X}\mathbf{Y}}^{-1} (y_{i} - \overline{\mathbf{Y}})^\top \right] d^2_W(\mathbf{Y}, \omega)
\]

\subsubsection*{\textbf{Interpretation}} Figure \ref{fig1:was} compare the performance of the estimators \(\widehat{\Lambda}_\oplus(x), \widehat{\zeta}_\oplus(x)\) and \(\hat{m}_\oplus(x)\), respectively, using Wasserstein distances. The losses associated with the three estimators allow the following conclusions to be drawn: \( \widehat{\Lambda}_\oplus(x) \), which results from the application of a denoising and multicollinearity reduction, shows the best performance. It displays losses close to zero for the majority of sample sizes, which testifies to its stability and precision. This makes it the most reliable option among the three. Before the application of the denoising and multicollinearity reduction, \( \widehat{\zeta}_\oplus(x) \) displays a wider range of losses, although overall close to zero, indicating that it remains a good estimator, but slightly less stable than \( \widehat{\Lambda}_\oplus(x) \). Finally, \( \hat{m}_\oplus(x) \) shows more varied losses, especially in the negative part, suggesting that it is less robust and less precise than the other two estimators. In fact, integrating the term \( (x_{i} - \overline{\mathbf{X}})^\top \widehat{\Sigma}_{\mathbf{X}\mathbf{Y}}^{-1} (y_{i} - \overline{\mathbf{Y}})^\top \) into \( \widehat{\zeta}_\oplus(x) \) allows a significant reduction in loss compared to \( \hat{m}_\oplus(x) \), demonstrating the importance of this interaction term in improving the model fit. After applying denoising and multicollinearity reduction, \( \widehat{\Lambda}_\oplus(x) \) benefits from a further improvement, further reducing the loss and making the estimation more precise. This improvement is particularly marked when the sample size \( n \) increases, suggesting that the inclusion of the interaction term and the selection of variables allow to better capture the complex relationships between the explanatory variables \( \mathbf{X} \) and the responses \( \mathbf{Y} \). Thus, this approach improves not only the estimation, but also the stability of the estimator, especially in a context where the dependencies between the variables are complex and must be modeled more precisely.

\begin{figure}[h]
\centering
\includegraphics[width=0.9\linewidth]{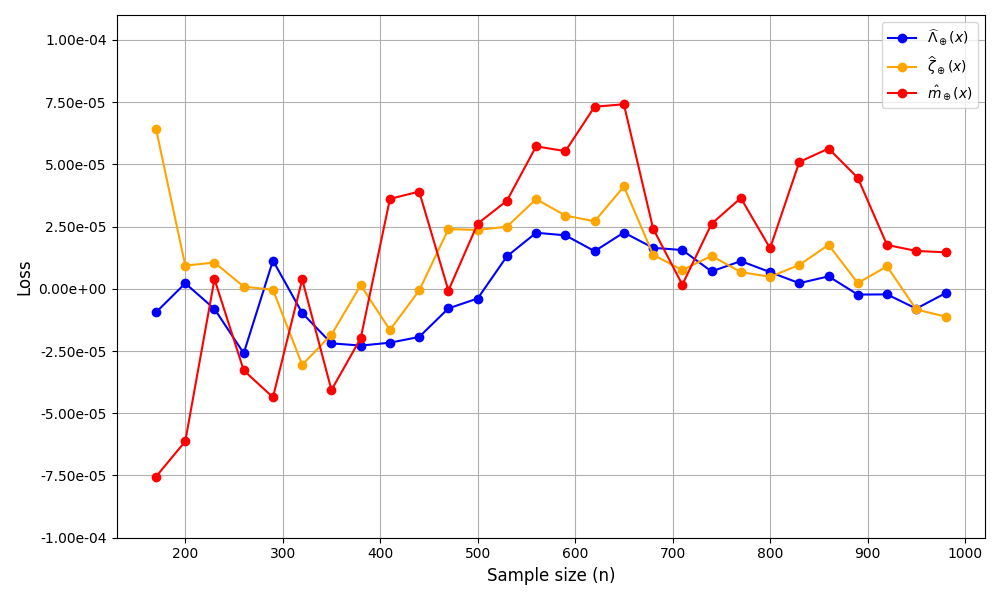}
\caption{Loss curves for the \(\widehat{\Lambda}_\oplus(x), \widehat{\zeta}_\oplus(x)\) and \(\hat{m}_\oplus(x)\) estimators using the Wasserstein distance (Probability Distributions).}
\label{fig1:was}
\end{figure}


\subsection{\textbf{Multilabel Fréchet Regression with Spherical Data}}
\subsubsection*{\textbf{Computation details}}
The explanatory variables \(\mathbf{X}\) follow a multivariate normal distribution \(\mathbf{X} \sim \mathcal{N}(\mu_\mathbf{X}, \Sigma_\mathbf{X})\), with \(\mu_\mathbf{X} = [0, 1, -1, 2, 0]\) and \(\Sigma_\mathbf{X}\) being positive definite and invertible. These data are generated in spherical form, which means that the covariance matrix \(\Sigma_\mathbf{X}\) is chosen to give an isotropic (spherical) distribution after transformation. The responses \(\mathbf{Y}\) are modeled by \(\mathbf{Y} \sim \mathcal{N}(\beta(x_i) \mathbf{1}_q, \Sigma_\mathbf{Y})\), where \(\beta(x_i) \sim \mathcal{N}(\beta_0 + \delta x_i, v_1)\), with \(\beta_0 = 1\), \(\delta = 0.5\) and \(v_1 = 0.1\). The covariance matrix \(\Sigma_\mathbf{Y}\) is symmetric and positive definite, capturing the correlations between the responses. The responses \(\mathbf{Y}\) evolve in a metric space \(\Omega\) equipped with the Wasserstein distance.
\subsubsection*{\textbf{Interpretation}} Figure \ref{fig2:was} compare the performance of the estimators \(\widehat{\Lambda}_\oplus(x), \widehat{\zeta}_\oplus(x)\) and \(\hat{m}_\oplus(x)\), respectively, using Wasserstein distances. 
The results presented show the losses associated with the three estimators for different sample sizes $n$. $\widehat{\Lambda}_\oplus(x)$ displays losses close to zero for most sample sizes, indicating high stability and accuracy, with minimal variations, especially for larger sizes. In contrast, $\widehat{\zeta}_\oplus(x)$ displays more scattered losses, oscillating between positive and negative values, suggesting a less stable fit than $\widehat{\Lambda}_\oplus(x)$, although it remains competitive. Finally, $\hat{m}_\oplus(x)$ shows more varied losses, often negative, which makes it less robust and accurate than the other estimators, with more uneven performances, especially for certain sample sizes. In conclusion, $\widehat{\Lambda}_\oplus(x)$ stands out for its stability and low losses, making it the most reliable estimator, while $\widehat{\zeta}_\oplus(x)$ and $\hat{m}_\oplus(x)$ show lower accuracy.

\begin{figure}[h]
\centering
\includegraphics[width=0.9\linewidth]{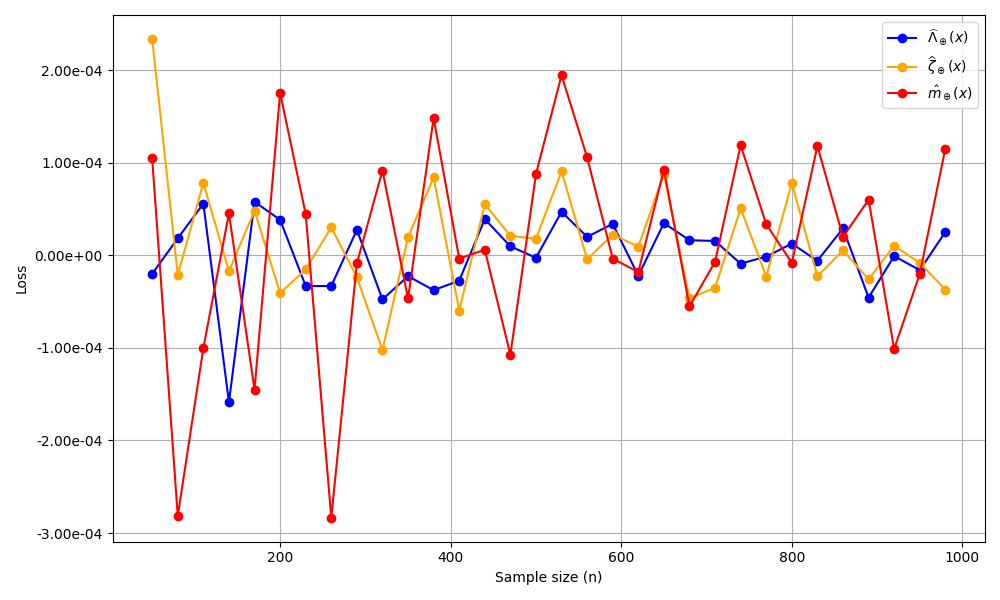}
\caption{Loss curves for the \(\widehat{\Lambda}_\oplus(x), \widehat{\zeta}_\oplus(x)\) and \(\hat{m}_\oplus(x)\) estimators using the Wasserstein distance (Spherical Data).}
\label{fig2:was}
\end{figure}

\section{Conclusion}
In this paper, we address two critical challenges in Fréchet regression for complex, high-dimensional data. First, we propose an extension of the Global Fréchet regression model, enabling explicit modeling of relationships between input variables and multiple responses. This advancement is particularly significant for multi-label settings, where intricate dependencies between predictors and responses are common. Second, we introduce a novel framework to mitigate the effects of noise and multicollinearity. By leveraging implicit regularization, our approach preserves the intrinsic structure of the data while effectively capturing complex dependencies, offering a robust and efficient alternative to traditional explicit regularization methods, which often introduce bias and complicate optimization. Theoretical analysis and numerical experiments demonstrate the effectiveness of our method, highlighting its potential for applications in fields such as imaging and bioinformatics. Future work could explore extensions to other data spaces, as well as scalability to even larger datasets, further broadening the applicability and versatility of our framework.


\bibliographystyle{imsart-nameyear} 
\bibliography{bibliography}       







\end{document}